\documentclass[11pt]{article}
\usepackage[utf8]{inputenc}
\usepackage[margin=1in,footskip=0.25in]{geometry}
\usepackage{amsmath,amsthm}
\usepackage{enumerate}
\usepackage{fancyhdr}
\usepackage{color}
\usepackage{listings}
\usepackage{graphicx}
\usepackage{amssymb}
\usepackage{mathtools}
\usepackage{comment}
\usepackage{hyperref}
\usepackage{subfig}
\usepackage[ruled,vlined]{algorithm2e}

\newtheorem{theorem}{Theorem}
\newtheorem{fact}{Fact}
\newtheorem{corollary}{Corollary}[theorem]
\newtheorem{definition}{Definition}
\newtheorem{lemma}[theorem]{Lemma}
\newtheorem{proposition}{Proposition}

\newtheorem{remark}[theorem]{Remark}
\newcommand{\norm}[1]{\left\lVert#1\right\rVert}

\definecolor{lightgray}{gray}{0.5}

\newcommand*\bI{\mathbf{I}}
\newcommand*\bSigma{\mathbf{\Sigma}}
\newcommand*\E{\mathbb{E}}
\newcommand*\R{\mathbb{R}}

\newcommand*\x{\mathbf{x}}

\newcommand*\bu{\mathbf{u}}
\newcommand*\ba{\mathbf{a}}

\newcommand*\A{\mathbf{A}}
\newcommand*\J{\mathbf{J}}

\newcommand{\eqdef}{\mathbin{\stackrel{\rm def}{=}}}
\makeatletter
\def\hlinewd#1{%
	\noalign{\ifnum0=`}\fi\hrule \@height #1 \futurelet
	\reserved@a\@xhline}
\makeatother

\newcommand{\bs}[1]{\boldsymbol{#1}}
\newcommand{\bv}[1]{\mathbf{#1}}

\newcommand{\Cam}[1]{\textcolor{blue}{Cam: #1}}

\title{Estimation of Shortest Path Covariance Matrices}
\author{Raj Kumar Maity \\ UMass Amherst \\ \texttt{rajkmaity@cs.umass.edu} \and Cameron Musco \\ UMass Amherst\\ \texttt{cmusco@cs.umass.edu}}
\date{}

\begin{document}

\maketitle


\begin{abstract}
We study the sample complexity of estimating the covariance matrix $\bSigma \in \R^{d\times d}$ of a distribution $\mathcal D$ over $\R^d$ given independent samples, under the assumption that $\bSigma$ is \emph{graph-structured}. In particular, we focus on shortest path covariance matrices, where the covariance between any two measurements is determined by the shortest path distance in an underlying graph with $d$ nodes. Such matrices generalize Toeplitz and circulant covariance matrices and are widely applied in signal processing applications, where the covariance between two measurements depends on the (shortest path) distance between them in time or space.

We focus on minimizing both the \emph{vector sample complexity}: the number of samples drawn from $\mathcal{D}$ and the \emph{entry sample complexity}: the number of entries read in each sample. The entry sample complexity corresponds to measurement equipment costs in signal processing applications. We give a very simple algorithm for estimating $\bs \Sigma$ up to spectral norm error $\epsilon \norm{\bSigma}_2$ using just $O(\sqrt{D})$ entry sample complexity and $\tilde O(r^2/\epsilon^2)$ vector sample complexity, where $D$ is the diameter of the underlying graph and $r \le d$ is the rank of $\bSigma$.
Our method is based on extending the widely applied idea of \emph{sparse rulers} for Toeplitz covariance estimation to the graph setting. 

In the special case when $\bSigma$ is a low-rank Toeplitz matrix, our result 
matches the state-of-the-art, with a far simpler proof. We also give an information theoretic lower bound matching our upper bound up to a factor $D$ and discuss some directions towards closing this gap. 

\end{abstract}

\section{Introduction}
Estimating the covariance matrix $\bSigma \in \R^{d \times d}$ of a distribution $\mathcal{D}$ over $\R^d$ from independent samples $\x^{(1)},\ldots ,\x^{(n)} \in \mathbb{R}^d$ is a fundamental statistical problem \cite{anderson1962introduction,kay1981spectrum,burg1982estimation}. When $\mathcal{D}$ is e.g., a multivariate sub-Gaussian distribution, it is known that $\Theta(d/\epsilon^2)$ samples are necessary and sufficient to estimate $\bSigma$ up to $\epsilon$ error in the spectral norm with high probability \cite{vershynin2018high} using the empirical covariance matrix $\frac{1}{n} \sum_{i=1}^n \x^{(i)} {\x^{(i)}}^T$.

When the dimension $d$ is large, this sample complexity can be high, and significant work has focused on giving improved bounds by leveraging \emph{structure} arising in various applications. Many types of structure have been considered, including when $\bSigma$ is low-rank \cite{chen2015exact}, sparse \cite{el2008operator}, low-rank plus sparse \cite{richard2012estimation,chen2014estimation}, and banded or generally decaying away from the diagonal \cite{bickel2008regularized}. Significant work focuses on the case when $\bSigma$ is Toeplitz, a structure that arises in signal processing and spatial statistics, when the covariance between measurements depends on the distances between them in time or space
\cite{burg1982estimation,barton1997structured,chen2015exact,qiao2017gridless,toep,lawrence2019low}.

\subsection{Graph-Structured Covariance Estimation}

In this work we focus on approximating \emph{graph-structured covariance matrices}. Graph structure has been used to describe correlations within complex data-sets in many domains, including gene regulatory networks \cite{bionetwork}, transportation networks \cite{trans}, and social and economic networks \cite{social}.  The entire field of \emph{graph signal processing} focuses on analyzing data with covariance structure based on an underlying graph \cite{shuman2013emerging,ortega2018graph}. Most commonly, the literature considers vectors whose covariance matrix  is of the form $\bSigma = \bv U \bs \Lambda \bv U^T$, where the columns of $\bv U \in \R^{d \times d}$ are the eigenvectors of some underlying graph Laplacian and $\bs \Lambda$ is any positive diagonal matrix, whose entries are known as the `graph power spectrum' \cite{perraudin2017stationary,marques2017stationary}. Covariance estimation under this model has been studied extensively \cite{graph,sandryhaila2014big,marques2017stationary}, although generally not in the style of giving formal sample complexity bounds for estimation up to a certain error.

Graph structured covariance matrices also arise in the study of Gaussian Markov random fields, where the inverse covariance $\bSigma^{-1}$ (the precision matrix) of a multivariate normal distribution is assumed to be sparse, with nonzero entries corresponding to the edges in some underlying dependency graph \cite{rue2005gaussian,uhler2017gaussian}. These edges correspond to conditionally dependent variables. Again, covariance estimation has been studied extensively in this model
\cite{ravikumar2011high,cai2011constrained,johnson2012high}.
%
%

\subsection{Shortest Path Covariance Matrices}
In this work we consider a simple combinatorial notion of graph-structure. We assume that each entry of $\bv x \sim \mathcal{D}$ corresponds to a node in some underlying graph $G$ and the covariance between entries $i$ and $j$, $\bSigma_{i,j}$ depends only on the \emph{shortest path distance} between the corresponding vertices $G$. Formally we define a {shortest path covariance matrix} as:
\begin{definition}[Shortest Path Covariance Matrix]\label{def:graphStructure}
A positive semidefinite matrix $\mathbf{\Sigma} \in \mathbb{R}^{d\times d}$ is said to be a shortest path covariance matrix if there is some unweighted graph $G = (V,E)$ with $d$ nodes and diameter $D$, along with a vector $\bv{a} = [\bv{a}_0,\ldots, \bv{a}_D]$ such that for all $i,j \in [d]$, $\bs{\Sigma}_{i,j} = \bv{a}_{d(v_i,v_j)}$, where $d(v_i,v_j)$ is the shortest path distance between $v_i$ and $v_j$ in $G$.
\end{definition}

Shortest path covariance matrices arise in many applications when the covariance between measurements depends on the distance between them. The most important example is when $G$ is a path graph. In this case, $\bs{\Sigma}$ is Toeplitz, with $\bs{\Sigma}_{i,j}$ depending only on $d(v_i,v_j) = |i-j|$. As discussed, Toeplitz covariance matrices are widely employed in signal processing, with applications ranging from signal direction of arrival estimation \cite{doa,5g,massive}, to sprectum sensing  \cite{radio1,radio2}, to medical and radar imaging \cite{img1,img2,img3,img4,img5}.

 When $G$ is a cycle, $\bs{\Sigma}$ is a circulant  matrix, a special case of a Toeplitz matrix with `wrap-around' structure \cite{gray2006toeplitz}. When $G$ is a grid or torus, $\bs{\Sigma}$ is a multi-dimensional Toeplitz or circulant matrix, a covariance structure arising in signal processing on two dimensional grids  \cite{yang2016vandermonde}.
 
 More general graph structure arises in fields such as ecology and geostatistics, when the shortest path distance over an appropriate graph is found to more effectively reflect the covariance of measurements than simple measures like Euclidean distance \cite{little1997kriging,kurtz2015sparse}. For example,  \cite{arnaud2003metapopulation} uses shortest path distances over a geographically-determined graph to study genetic variance among separated populations. \cite{gardner2003predicting} models the covariances between measurements of stream temperatures as depending on the shortest path distance between these measurements along a graph whose  edges correspond to paths in the river system. Similarly, shortest path covariance matrices are used for Gaussian process regression (kriging) when modeling phenomena in estuaries and other aquatic environments, where distances along a waterway graph reflect covariance between measurements better than Euclidean distance \cite{rathbun1998spatial,boisvert2011modeling}.

\subsection{Our Contributions}

As far as we are aware, outside the special case of Toeplitz matrices (path graphs), sample complexity bounds for estimating shortest path covariance matrices have not been studied formally. We study such bounds, focusing on two notions of sample complexity, which have been considered also in the study of Toeplitz covariance estimation \cite{chen2015exact,qiao2017gridless,toep}:
%
\begin{enumerate}
    \item \textbf{Vector Sample complexity (VSC)} The number of independent samples from $\bv x^{(1)},\ldots,\bv x^{(n)} \sim \mathcal{D}$ required to estimate $\bSigma$ to a specified tolerance.
    \item \textbf{Entry Sample complexity (ESC) } The number of entries in each $d$-dimensional sample $\bv{x}^{(i)} \sim \mathcal{D}$,  an algorithm must read, assuming the same subset of entries is read in each sample.
\end{enumerate}
VSC is the classic notion of sample complexity in high-dimensional statistics, while ESC complexity has received less attention, since for many covariance matrices (e.g., even when $\bs{\Sigma}$ is diagonal or rank-$1$), one cannot achieve ESC less than the trivial $d$. However, surprisingly when $\bSigma$ is Toeplitz, much lower ESC of $O(\sqrt{d})$ or $O(\sqrt{rank(\bSigma})$ can be achieved \cite{qiao2017gridless,toep,lawrence2019low}. In a signal processing setting, this complexity corresponds to the number of sensing devices needed, while VSC corresponds to the number of measurements that must be taken. In different applications, these different complexities have different costs and generally there is a tradeoff between achieving small ESC and small VSC.

Our main result shows that for a general graph $G$ we can achieve ESC complexity depending on the \emph{square root of $G$'s diameter}, while simultaneously achieving reasonable vector sample complexity, depending polynomially on the rank $d$ and the error parameter $\epsilon$. 
\begin{theorem}[Main Theorem -- Informal]\label{thm:mainIntro}
For any shortest path covariance matrix $\bSigma \in \R^{d \times d}$ (Def. \ref{def:graphStructure}),  there is an efficient algorithm which, given $n$ independent samples from the multivariate Gaussian distribution $\mathcal{N}(\bv 0,\bSigma)$, returns $\Tilde{\bSigma} $ such that $\|\bSigma- \Tilde{\bSigma}\|_2 \leq \epsilon\|\bSigma\|_2 $ with probability $\ge 1-\delta$, using entry sample complexity $O(\sqrt{D})$  and vector sample complexity $n = O \left (\frac{d^2\log (\frac{D}{\delta})}{\epsilon^2} \right )$, where $D \le d-1$ is the diameter of the underlying graph $G$.
\end{theorem}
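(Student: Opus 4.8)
The plan is to build an estimator from two ingredients: a \emph{sparse ruler} over the graph $G$ that lets us observe, for every distance $k \in \{0,\ldots,D\}$, at least one pair of nodes $(v_i,v_j)$ with $d(v_i,v_j)=k$, and a standard empirical-average argument to estimate each scalar $\bv a_k$ from the samples restricted to the ruler's node set. First I would recall the classical fact that the path graph on $D{+}1$ nodes admits a sparse ruler of size $O(\sqrt{D})$ — a set $S$ of indices such that every integer in $\{0,\ldots,D\}$ is realized as a difference of two elements of $S$. Since $G$ has diameter $D$, fix a shortest path $P$ of length $D$ in $G$; the images under $P$ of a sparse ruler give a set $S \subseteq V$ of size $O(\sqrt{D})$ such that for each $k$ there exist $v_i,v_j \in S$ with $d(v_i,v_j)=k$ (the shortest path distance along $P$ equals $k$, and no shorter path exists because $P$ is a geodesic and $G$ is unweighted — this needs a short argument using that subpaths of geodesics are geodesics). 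Reading exactly the entries indexed by $S$ in every sample gives ESC $|S| = O(\sqrt{D})$.

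Next, for each $k$, pick one representative pair $(i_k,j_k)$ with $d(v_{i_k},v_{j_k})=k$ and set $\tilde{\bv a}_k = \frac{1}{n}\sum_{t=1}^n \x^{(t)}_{i_k}\x^{(t)}_{j_k}$, an unbiased estimator of $\bv a_k = \bSigma_{i_k,j_k}$. Then assemble $\tilde{\bSigma}$ by $\tilde{\bSigma}_{i,j} = \tilde{\bv a}_{d(v_i,v_j)}$ (the graph $G$ and hence all pairwise distances are assumed known). The error matrix $\bSigma - \tilde{\bSigma}$ is itself a shortest path matrix with defining vector $(\bv a_k - \tilde{\bv a}_k)_{k=0}^D$, so to bound $\|\bSigma - \tilde{\bSigma}\|_2$ it suffices to bound $\max_k |\bv a_k - \tilde{\bv a}_k|$ and then pay a factor equal to the maximum number of distinct vertices at any fixed distance from a given vertex, which is at most $d$; more carefully, $\|\bSigma-\tilde\bSigma\|_2 \le \sum_{k=0}^D |\bv a_k - \tilde{\bv a}_k|\cdot\|M_k\|_2$ where $M_k$ is the $0/1$ distance-$k$ adjacency matrix, and $\sum_k \|M_k\|_2 \le \sum_k \|M_k\|_F \le \sum_k d \le d(D{+}1)$ — crude, so I would instead argue $\|\bSigma-\tilde\bSigma\|_2 \le d \cdot \max_k|\bv a_k - \tilde{\bv a}_k|$ via the bound $\|N\|_2 \le \sqrt{\|N\|_1\|N\|_\infty}$ applied to each row/column having at most... in fact each row of $\bSigma - \tilde\bSigma$ has $\ell_1$ norm at most $d\max_k|\bv a_k-\tilde{\bv a}_k|$, giving $\|\bSigma-\tilde\bSigma\|_2 \le d\max_k|\bv a_k-\tilde{\bv a}_k|$.

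For the per-entry concentration, each $\x^{(t)}_{i_k}\x^{(t)}_{j_k}$ is a product of jointly Gaussian coordinates, hence sub-exponential with parameter $O(\|\bSigma\|_2)$ (since the relevant $2\times 2$ covariance block has operator norm $\le \|\bSigma\|_2$). A Bernstein bound gives $|\bv a_k - \tilde{\bv a}_k| \le \epsilon \|\bSigma\|_2 / d$ with failure probability $\exp(-\Omega(n\epsilon^2/d^2))$ per $k$; a union bound over the $D{+}1$ values of $k$ yields the claimed $n = O(d^2\log(D/\delta)/\epsilon^2)$. Combining with the matrix-norm bound gives $\|\bSigma-\tilde\bSigma\|_2 \le \epsilon\|\bSigma\|_2$ with probability $\ge 1-\delta$.

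I expect the main obstacle to be the spectral-norm error analysis — specifically getting the dependence on $d$ (and ideally improving it to the rank $r$, as the informal abstract promises via $\tilde O(r^2/\epsilon^2)$) rather than a worse power of $d$ or an extra $D$ factor. The naive $\ell_1$-row bound loses a full factor of $d$; to sharpen it one would want to exploit that $\bSigma - \tilde\bSigma$ is still shortest-path structured (so its spectral norm relates to a Fourier-type transform of $(\bv a_k - \tilde{\bv a}_k)$ in the Toeplitz case, or to the spectra of the distance-$k$ matrices $M_k$ in general) and that low rank of $\bSigma$ constrains the $\bv a_k$'s. Handling the low-rank refinement cleanly — probably by observing that a low-rank shortest-path matrix is determined by few $\bv a_k$'s, or by estimating $\bSigma$ restricted to a small principal submatrix and extending — is where I anticipate the real work, and it is plausibly the reason the formal theorem in the body states the weaker $d^2$ bound while the abstract advertises the $r^2$ bound as a separate (Toeplitz-specialized) consequence.
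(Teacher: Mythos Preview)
Your proposal is correct and follows essentially the same route as the paper: a graph sparse ruler on a diameter path for ESC $O(\sqrt{D})$, one representative pair per distance with an empirical-average estimator, sub-exponential (Bernstein) concentration plus a union bound over the $D{+}1$ distances, and finally a crude matrix-norm step converting per-entry error $\epsilon\|\bSigma\|_2/d$ into spectral error $\epsilon\|\bSigma\|_2$. The only cosmetic difference is that the paper uses $\|\cdot\|_2 \le \|\cdot\|_F$ for that last step, whereas you use the row-sum bound $\|\cdot\|_2 \le \sqrt{\|\cdot\|_1\|\cdot\|_\infty}$; both lose exactly a factor of $d$.

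Your closing speculation about the $r^2$ refinement is the one place you misjudge the difficulty. The paper obtains it by a single line: since $\bSigma$ has rank $r$, $Tr(\bSigma)=d\,\bv a_0 \le r\|\bSigma\|_2$, hence $\bv a_0 \le (r/d)\|\bSigma\|_2$. This tightens the sub-exponential parameter of $\x_i\x_j$ from $O(\|\bSigma\|_2)$ to $O((r/d)\|\bSigma\|_2)$, and the rest of your argument then yields $n=O(r^2\log(D/\delta)/\epsilon^2)$ with no further changes and no Toeplitz-specific or Fourier-analytic structure needed.
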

Our entry sample complexity $O(\sqrt{D})$ is significantly less than the naive bound of $d$, and matches the start-of-the-art for the special case of Toeplitz matrices, where $D = d-1$ ($G$ is a path graph). It is easily seen to be optimal since reading $o(\sqrt{D})$ entries will only give access to $o(D)$ possible covariance values (between all pairs of entries read), and so we will receive no information about some entries in $\bSigma$.

We generalize our bound to when $\bSigma$ has rank-$k$, in which case $O \left (\frac{k^2\log (\frac{D}{\delta})}{\epsilon^2} \right )$ VSC can be achieved. This slightly improves on the best known result for Toeplitz covariance matrices \cite{toep}, while at the same time using a much simpler  proof that relies only on shortest path structure.
We show that our vector sample complexity is also near optimal given $O(\sqrt{D})$ ESC: 
\begin{theorem}[Lower Bound -- Informal]
Any algorithm that is given  $n$ independent samples from the multivariate Gaussian distribution $\mathcal{N}(\bv 0,\bSigma)$ where $\bSigma$ is a shortest path covariance matrix whose underlying graph has diameter $D$, reads a fixed subset of $O(\sqrt{D})$ entries from each sample, and returns $\bv{\tilde \Sigma}$ such that $\|\bSigma- \Tilde{\bSigma}\|_2 \leq \epsilon\|\Tilde{\bSigma}\|_2 $ with good probability requires $n = \Omega \left (\frac{d^2}{D\epsilon^2} \right )$ vector samples.
\end{theorem}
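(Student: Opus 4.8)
I would prove the lower bound with an information-theoretic two-point (Le Cam) argument. The goal is to construct two shortest path covariance matrices $\bSigma^{(0)},\bSigma^{(1)}$ supported on a common graph $G$ with $d$ nodes and diameter $D$ that are (i) far apart in spectral norm, so that no single $\tilde\bSigma$ can satisfy $\norm{\bSigma^{(b)}-\tilde\bSigma}_2\le\epsilon\norm{\tilde\bSigma}_2$ for both $b=0,1$, yet (ii) statistically almost indistinguishable after restriction to \emph{any} fixed set $S$ of $O(\sqrt D)$ coordinates, which is all the algorithm ever observes. An estimator meeting the theorem's guarantee would then solve this testing problem with constant advantage, and Le Cam's method forces the sample count.

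For the construction I would use a ``broom'': a path $w_0,\dots,w_{D-1}$ together with $N\eqdef d-D$ extra leaves all attached to $w_0$. This graph has $d$ nodes, diameter exactly $D$ (for $D\ge 2$), and --- the key feature --- every pair of the $N$ leaves is at distance exactly $2$. Writing $\bv M_k$ for the $0/1$ indicator matrix of vertex pairs at distance $k$, I set $\bSigma^{(0)}=\bI$ and $\bSigma^{(1)}=\bI+2\eta\,\bv M_2$ with $\eta\eqdef c\epsilon/d$ for a small absolute constant $c$. Both are shortest path covariance matrices; $\bSigma^{(0)}\succeq 0$ trivially, and since the maximum row-sum of $\bv M_2$ is $O(d)$ we get $\norm{2\eta\bv M_2}_2\le 1/2$ for $\epsilon$ below a constant, so $\bSigma^{(1)}\succeq 0$ as well.

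The separation step: restricting $\bv M_2$ to the $N$ leaves yields $\J_N-\bI_N$, hence $\norm{\bv M_2}_2\ge N-1=d-D-1$, and assuming $D\le d/2$ this gives $\norm{\bSigma^{(1)}-\bSigma^{(0)}}_2=2\eta\norm{\bv M_2}_2=\Omega(\eta d)=\Omega(c\epsilon)$; since $\norm{\bSigma^{(b)}}_2=1+O(\epsilon)=\Theta(1)$, taking $c$ a large enough absolute constant makes this exceed any prescribed constant multiple of $\epsilon$, which by a one-line triangle-inequality argument rules out a common $\tilde\bSigma$. The indistinguishability step is the crux. The algorithm sees only $n$ i.i.d.\ samples of $\mathcal N(\bv 0,\bSigma^{(b)}_{S,S})$; the two restricted covariances differ by $\bv E\eqdef 2\eta(\bv M_2)_{S,S}$, supported on the pairs of $S$ at distance $2$, of which there are at most $|S|^2=O(D)$ \emph{no matter how $S$ is chosen} --- this is precisely where the $O(\sqrt D)$ entry-sample-complexity restriction enters. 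Hence $\norm{\bv E}_F^2=O(\eta^2 D)$, and since $\bSigma^{(0)}_{S,S}=\bI$ with $\norm{\bv E}_2$ small, a second-order expansion of the Gaussian KL divergence gives $\mathrm{KL}\big(\mathcal N(\bv 0,\bSigma^{(0)}_{S,S})\,\|\,\mathcal N(\bv 0,\bSigma^{(1)}_{S,S})\big)=\tfrac14\norm{\bv E}_F^2+O(\norm{\bv E}_2^3)=O(\eta^2 D)$. The KL between the $n$-fold products is then $O(n\eta^2 D)$, and Le Cam/Pinsker says success with constant advantage requires this to be $\Omega(1)$, i.e.\ $n=\Omega(1/(\eta^2 D))=\Omega(d^2/(D\epsilon^2))$.

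The main obstacle is engineering the graph to realize the right tension: the perturbed coefficient must live in an equidistant block large enough that an $\eta=\Theta(\epsilon/d)$ perturbation shifts the \emph{whole} $d\times d$ matrix by $\Omega(\epsilon)$ spectrally --- which is exactly the $\norm{\bv M_2}_2=\Omega(d)$ estimate, forcing both the broom structure and the hypothesis $D\lesssim d$ --- while that same coefficient is seen with multiplicity only $O(D)$ in any $\sqrt D$-subset. When $D$ is within a constant factor of $d$ no $\Omega(d)$-size equidistant block exists, and the two-point argument must be upgraded to a Fano argument over a packing of $2^{\Omega(D)}$ shortest path covariance matrices obtained by independently perturbing the $\Theta(D)$ distinct distance-coefficients of a balanced graph (e.g.\ a ``path of cliques''); there the KL bound is again the same $O(n\eta^2 D)$ pair-counting estimate, but lower-bounding the spectral separation of two packing elements --- a random $\pm\eta$ combination of the $\bv M_k$'s --- now needs a matrix-concentration / Khintchine-type estimate, which is the genuinely technical ingredient for that regime.
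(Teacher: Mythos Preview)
Your Le Cam argument is sound in the regime $D\le d/2$, but the paper's proof takes a simpler and more uniform route that avoids your case split entirely. Instead of perturbing only the distance-$2$ coefficient and engineering a broom graph so that $\norm{\bv M_2}_2=\Omega(d)$, the paper perturbs \emph{all} distance coefficients simultaneously by the same amount: it sets $\bSigma_1=\bI$ and $\bSigma_2=\bI+\tfrac{\epsilon}{d}\J$, where $\J$ is the all-ones matrix. Both are shortest path covariance matrices for \emph{any} underlying graph $G$ (take $\ba_0=1$ or $1+\epsilon/d$ and $\ba_s=0$ or $\epsilon/d$ for $s\ge 1$), so no special graph construction is needed and the argument holds for every diameter $D$ at once. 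The spectral separation is immediate because $\tfrac{\epsilon}{d}\J$ has rank one with eigenvalue exactly $\epsilon$. On the indistinguishability side the two arguments coincide: restricting to any fixed $S$ of size $s=O(\sqrt D)$ gives $[\bSigma_2]_S-[\bSigma_1]_S=\tfrac{\epsilon}{d}\J_S$, again a rank-one perturbation, so the paper computes the Gaussian KL in closed form via the matrix determinant lemma and Sherman--Morrison to get $KL=O(\epsilon^2 s^2/d^2)=O(\epsilon^2 D/d^2)$, matching your Frobenius-norm bound $O(\eta^2|S|^2)$. The upshot is that the all-ones perturbation buys you separation for free in every regime, so the Fano upgrade you sketch for $D\approx d$ is unnecessary; your localized distance-$2$ perturbation is a valid alternative but gains nothing here.
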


Our work leaves open the  question of if we can close the gap of $D$ in our vector sample complexity upper and lower bounds. \cite{toep} achieves this in the special case of Toeplitz matrices, where $D = d-1$. They give an algorithm achieving ESC $\Theta(\sqrt{d})$ and VSC $\tilde O \left ( \frac{d}{\epsilon^2} \right)$. We extend this to certain restricted types of trees, leaving open further extensions. 

\subsection{Our Techniques}

Our primary technique in achieving ESC $O(\sqrt{D})$ for shortest path covariance estimation is a simple extension of  the idea of \emph{sparse rulers} to the graph setting. A sparse ruler is a subset of integers $R \subseteq \{1,\ldots, d\}$ so that every distance in $\{0,\ldots, d-1\}$ can be `measured' as the the distance $|i-j|$ between some pair $i,j \in R$ \cite{leech1956representation,wichmann2002note}. It is not hard to see that one can construct a ruler $R$ with just $O(\sqrt{d})$ entries (see Proposition \ref{prop:sparse}). Using such a ruler, one can read just $O(\sqrt{d})$ entries of $\bv{x} \sim \mathcal{D}$ with Toeplitz covariance $\bSigma$ and still obtain an estimate of all entries in  $\bSigma$. Since $\bSigma_{k,\ell}$ depends only on the distance $|k-\ell| \in \{0,\ldots,d-1\}$, at least one pair of entries read $\bv x_i, \bv x_j$ (determined by the ruler $R)$ will have the same distance and hence covariance as entries $k$ and $\ell$. So $\bv x_i \cdot \bv x_j$ will be an unbiased estimator of $\bSigma_{k,\ell}$.
This simple but ingenious observation lets sparse rulers to be used to achieve $O(\sqrt{d})$ ESC for Toeplitz covariance estimation, and generally to improve the ESC of many signal processing problems in both theory and practice \cite{moffet1968minimum,pillai1985new,romero2015compressive,qiao2017gridless,toep}.

We show how to extend the notion of a sparse ruler to the graph setting, selecting a small subset of $O(\sqrt{D})$ nodes from $G$ whose pairwise shortest path distances include all distances from $0$ to the diameter $D$. By reading entries of $\bv{x} \sim \mathcal{D}$ at the indices corresponding to these nodes, we can thus obtain estimates of all entries of a shortest path covariance matrix $\bSigma$ with underlying graph $G$. With enough vector samples, these estimates are accurate enough to give a good estimate $\bs{\tilde \Sigma}$ of the full covariance matrix, yielding our main Theorem \ref{thm:mainIntro}.


\section{Background and Problem Formulation }

We start by introducing necessary notation and technical tools and formally defining our covariance estimation setting.

\subsection{Notation and Technical Tools} Throughout we use boldface  $\mathbf{X}$ to denote a matrix and boldface $\mathbf{x}$ to denote a vector. We let $\bv{X}_{i,j}$ denote the entry of $\bv{X}$ in the $i^{th}$ row and $j^{th}$ column and $\bv{x}_i$ denote the $i^{th}$ entry of $\bv{x}$. We use $\left\| \bv{X}\right\|_2$ to denote the matrix  spectral norm and for square $\mathbf{X}$, we use $det(\mathbf{X})$ to denote the determinant. We use $Tr(\mathbf{X})$ to denote the trace, which is the sum of the diagonal entries of $\bv{X}$, or equivalently, the sum of $\bv{X}$'s eigenvalues.
For any two distributions $P$ and $Q$ on any domain $\mathcal{X}$, we let $\left\|P-Q \right\|_{TV}$ denote the total variation distance $KL(P\|Q)$ denote the KL-divergence:  $KL(P\|Q) \eqdef \int_{x \in \mathcal{X}}P(x)\log\left(\frac{P(x)}{Q(x)} \right)dx$. 
In our sample complexity bounds we use a few standard probability tools related to concentration bounds for sub-exponential random variables.


 \begin{definition} [Sub-Exponential Random Variable \cite{wainwright2019high}]\label{def:subexp}A random variable $X$ with mean $\mu$ is sub-exponential  with parameter $(\tau^2,\nu) $ if: 
\begin{align*}
 \mathbb{E}[e^{s(X-\mu)}]\leq e^{\frac{s^2\tau^2}{2}},   \quad \forall s \text{ with }  \left|s \right|\leq \frac{1}{\nu}.
\end{align*}
\end{definition}
Any random variable  behaves sub-exponentially if it satisfies the Bernstein condition:
\begin{fact}[Bernstein Condition \cite{wainwright2019high}]
A random variable $X$ with mean $\mu$ and variance $\sigma^2$ satisfies the Bernstein condition if there exists $b>0$ such that 
\begin{align}
    \left |\mathbb{E}[(X-\mu)^k ] \right |\leq \frac{1}{2}k!\sigma^2b^{k-2} \quad \text{ for  } k\geq 2 . \label{eq:bern}
 \end{align}
 If $X$ satisfies \eqref{eq:bern}, then it is sub-exponential with parameter $\tau = \sqrt{2}\sigma ,\nu=2b$. 
 \end{fact}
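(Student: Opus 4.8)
The plan is to bound the moment generating function of $X-\mu$ directly via its Taylor expansion, substitute the moment bound \eqref{eq:bern} term by term, sum the resulting geometric series, and then compare the result to a Gaussian-type bound. First I would write, for any $s$ with $|s|b < 1$,
\[
\mathbb{E}\!\left[e^{s(X-\mu)}\right] = 1 + s\,\mathbb{E}[X-\mu] + \sum_{k=2}^{\infty}\frac{s^k\,\mathbb{E}[(X-\mu)^k]}{k!}.
\]
The interchange of expectation and infinite sum here is legitimate because, by \eqref{eq:bern}, the series of absolute values of the terms is dominated by $\frac{\sigma^2}{2}\sum_{k\ge2}|s|^kb^{k-2}$, which converges whenever $|s|b<1$. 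Since $X$ has mean $\mu$, the linear term $s\,\mathbb{E}[X-\mu]$ vanishes.

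Next I would plug in the moment bound $\left|\mathbb{E}[(X-\mu)^k]\right| \le \tfrac12 k!\,\sigma^2 b^{k-2}$, so that the $k!$ in the denominator cancels, leaving a geometric series:
\[
\left|\sum_{k=2}^{\infty}\frac{s^k\,\mathbb{E}[(X-\mu)^k]}{k!}\right| \le \frac{\sigma^2}{2}\sum_{k=2}^{\infty}|s|^k b^{k-2} = \frac{\sigma^2 s^2}{2}\cdot\frac{1}{1-|s|b}, \qquad |s|b<1.
\]
Then I would restrict attention to $|s|\le \frac{1}{2b} = \frac{1}{\nu}$, which forces $1-|s|b \ge \tfrac12$ and hence $\frac{1}{1-|s|b}\le 2$. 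Combining with the display above gives $\mathbb{E}\!\left[e^{s(X-\mu)}\right] \le 1 + \sigma^2 s^2$ for all $|s|\le \frac{1}{2b}$.

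Finally, using the elementary inequality $1+x\le e^x$, I conclude $\mathbb{E}\!\left[e^{s(X-\mu)}\right] \le e^{\sigma^2 s^2} = e^{s^2(\sqrt{2}\sigma)^2/2}$, which is exactly the defining condition in Definition \ref{def:subexp} with $\tau = \sqrt{2}\,\sigma$ and $\nu = 2b$. There is no substantive obstacle in this argument; the only point needing a little care is the justification for swapping expectation and the infinite series, which is handled by the absolute convergence of the dominating geometric series on the interval $|s|b<1$ (and in particular on the smaller interval $|s|\le\frac{1}{2b}$ where the claimed bound lives).
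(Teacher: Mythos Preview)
Your argument is correct and is the standard textbook derivation: expand the moment generating function, kill the linear term, bound the tail by the geometric series furnished by \eqref{eq:bern}, then use $1+x\le e^x$. Note, however, that the paper does not actually prove this fact; it is stated with a citation to \cite{wainwright2019high} and used as a black box, so there is no in-paper proof to compare against. Your proof is essentially the one found in that reference.
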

 
 
 \begin{theorem}[Sub-Exponential Concentration Bound \cite{wainwright2019high}]\label{thm:bern}
 Let $X_1,\ldots ,X_n$ be independent sub-exponential random variables   each with  parameters $(\tau^2,\nu) $ and  $\mathbb{E}[X_i]=\mu.$ Then:
 \begin{align}
     P \left[\left|\frac{1}{n}\sum_i X_i -\mu \right|>t \right] \leq \exp\left(-\frac{n}{2}\min \left\{\frac{t^2}{\tau^2},\frac{t}{\nu}\right\} \right) \label{eq:bern2}.
 \end{align}
 
 \end{theorem}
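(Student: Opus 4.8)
The plan is to prove \eqref{eq:bern2} by the standard Chernoff bounding method: first control the upper tail $P[\frac{1}{n}\sum_i X_i - \mu > t]$, then obtain the lower tail by symmetry, and finally combine.

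First I would fix a parameter $s>0$ with $s\le 1/\nu$ and apply Markov's inequality to the nonnegative random variable $\exp\!\left(s\sum_i(X_i-\mu)\right)$:
\[
P\left[\frac{1}{n}\sum_i X_i - \mu > t\right] \le e^{-snt}\,\E\left[\exp\left(s\sum_i (X_i-\mu)\right)\right].
\]
Since the $X_i$ are independent, the expectation factors as $\prod_i \E[e^{s(X_i-\mu)}]$, and because each $X_i$ is sub-exponential with parameters $(\tau^2,\nu)$ and $|s|\le 1/\nu$, Definition~\ref{def:subexp} bounds each factor by $e^{s^2\tau^2/2}$. Hence the right-hand side is at most $\exp\!\left(n\left(s^2\tau^2/2 - st\right)\right)$.

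Next I would optimize the exponent $g(s)=s^2\tau^2/2 - st$ over the admissible range $s\in(0,1/\nu]$. The unconstrained minimizer is $s^\star = t/\tau^2$. If $t\le \tau^2/\nu$, then $s^\star$ is admissible and $g(s^\star) = -t^2/(2\tau^2)$. If instead $t>\tau^2/\nu$, then $g$ is decreasing on $(0,1/\nu]$, so I take $s=1/\nu$ and use $\tau^2/\nu < t$ to get $g(1/\nu) = \tau^2/(2\nu^2) - t/\nu < -t/(2\nu)$. In either case $g(s)\le -\tfrac12\min\{t^2/\tau^2,\ t/\nu\}$, which gives the claimed upper-tail bound with $n$ in the exponent.

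Finally, applying the same argument to the variables $-X_i$ (which are sub-exponential with the same parameters $(\tau^2,\nu)$ and mean $-\mu$) yields the identical bound on $P[\frac{1}{n}\sum_i X_i - \mu < -t]$, and a union bound combines the two tails. The only mildly delicate step is the constrained optimization: the two-case split on whether $t\le\tau^2/\nu$ is exactly what produces the $\min$ of the sub-Gaussian-type term $t^2/\tau^2$ and the sub-exponential-type term $t/\nu$. There is no real obstacle here — this is a classical Bernstein-type argument — the one subtlety worth noting is that the union bound nominally introduces a factor of $2$, which we absorb as in the cited reference (equivalently, one may state the one-sided form).
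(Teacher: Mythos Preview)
The paper does not actually prove this theorem; it is stated as a background result cited from \cite{wainwright2019high} and used as a black box in the analysis of Theorem~\ref{thm:upper}. Your proposal is the standard Chernoff argument for sub-exponential tails and is correct, including the two-regime optimization that yields the $\min$. Your closing remark about the factor of $2$ from the union bound is apt: as written, the two-sided statement in the paper suppresses this constant (as is common in such references), so strictly speaking your argument proves the one-sided bound exactly and the two-sided bound up to that harmless factor.
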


We will also use: 
 \begin{theorem}[Isserlis's  Theorem \cite{wainwright2019high}]\label{thm:iss}
If $(X_1,X_2,\ldots,X_n)$  is a zero-mean multivariate normal random vector and $P_n^2$ is the set of all possible pairings of the set $\{1,\ldots,n\}$:
\begin{align*}
    \mathbb{E}[X_1\ldots X_n]= \sum_{p \in P^2_n} \prod_{(i,j)\in p} \mathbb{E}[X_iX_j]=\sum_{p \in P^2_n} \prod_{(i,j)\in p} Cov[X_i ,X_j].
\end{align*}
\end{theorem}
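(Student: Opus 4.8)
The plan is to prove Isserlis's theorem directly from the moment generating function (MGF) of a Gaussian vector. Write $\bSigma_{i,j} = \E[X_iX_j] = Cov[X_i,X_j]$ for the covariance entries. Since $(X_1,\ldots,X_n)$ is zero-mean jointly Gaussian, its MGF is
\[
M(\bv t) = \E\!\left[\exp\!\left(\textstyle\sum_{i} t_i X_i\right)\right] = \exp\!\left(\tfrac12 \textstyle\sum_{i,j} t_i t_j \bSigma_{i,j}\right),
\]
which is analytic in a neighborhood of the origin. Hence the target moment is $\E[X_1\cdots X_n] = \partial_{t_1}\cdots\partial_{t_n} M(\bv t)\big|_{\bv t=0}$, so it suffices to find the coefficient of the monomial $t_1 t_2\cdots t_n$ in the power series of $M$: a monomial survives $\partial_{t_1}\cdots\partial_{t_n}$ evaluated at the origin only if it contains each $t_i$ to the first power, and then it contributes exactly its coefficient.

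First I would expand $M(\bv t) = \sum_{k\ge 0} \frac{1}{2^k k!}\big(\sum_{i,j} t_i t_j \bSigma_{i,j}\big)^k$, so that its degree-$2k$ part is a sum over ordered tuples $(i_1,j_1,\ldots,i_k,j_k)$ of $\frac{1}{2^k k!}\,\bSigma_{i_1,j_1}\cdots\bSigma_{i_k,j_k}\, t_{i_1}t_{j_1}\cdots t_{i_k}t_{j_k}$. Such a monomial equals $t_1\cdots t_n$ exactly when $2k=n$ and the multiset $\{i_1,j_1,\ldots,i_k,j_k\}$ equals $\{1,\ldots,n\}$; in particular if $n$ is odd no such term exists and both sides of the claimed identity vanish (the pairing set $P_n^2$ is empty). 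For $n=2k$, each pairing $p\in P_n^2$ of $\{1,\ldots,n\}$ is produced by exactly $2^k k!$ ordered tuples ($k!$ orderings of the $k$ pairs, times $2$ internal orderings within each pair), and every such tuple gives $\bSigma_{i_1,j_1}\cdots\bSigma_{i_k,j_k} = \prod_{(a,b)\in p}\bSigma_{a,b}$. Summing over all tuples, the coefficient of $t_1\cdots t_n$ in $M$ is $\frac{1}{2^k k!}\cdot 2^k k!\sum_{p\in P_n^2}\prod_{(a,b)\in p}\bSigma_{a,b} = \sum_{p\in P_n^2}\prod_{(a,b)\in p}\bSigma_{a,b}$, which is exactly the asserted formula since $\bSigma_{a,b} = \E[X_aX_b] = Cov[X_a,X_b]$.

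The only genuinely non-routine point is the combinatorial bookkeeping in the last step: checking that the overcounting factor relating ordered tuples to unordered pairings is precisely $2^k k!$, which cancels the $\frac{1}{2^k k!}$ from the exponential series. The remaining details — term-by-term differentiation of the absolutely convergent power series near $\bv t=0$, and the odd-$n$ case — are standard. An alternative, generating-function-free argument proceeds by induction on $n$ via Gaussian integration by parts (Stein's identity) $\E[X_1\, g(X_2,\ldots,X_n)] = \sum_{j=2}^n \E[X_1X_j]\,\E[\partial_j g]$ applied to $g = X_2\cdots X_n$: this writes the $n$-fold moment as a sum over the choice of a partner $j$ for index $1$ of $\E[X_1X_j]$ times an $(n-2)$-fold moment, which by the inductive hypothesis equals the sum over pairings of the remaining indices — and jointly these choices enumerate exactly $P_n^2$. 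Either way, Gaussianity enters only through the quadratic log-MGF (equivalently, through Stein's identity), so no tools beyond those already introduced are required.
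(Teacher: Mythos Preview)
Your argument is correct: the MGF-based proof with the $2^k k!$ overcounting factor is the standard derivation of Isserlis's theorem, and your alternative via Stein's identity is also valid. However, note that the paper does not actually prove this statement at all --- it is stated as a cited background result from \cite{wainwright2019high} and simply invoked later in the proof of Theorem~\ref{thm:upper}. So there is no ``paper's own proof'' to compare against; you have supplied a proof where the paper supplies none.
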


\subsection{Shortest Path Covariance Estimation}\label{sec:gs}
We consider the problem of estimating a positive semidefinite covariance matrix $\mathbf{\Sigma} \in \mathbb{R}^{d\times d}$ given $n$  independent samples $\x^{(1)},\ldots,\x^{(n)} \in \mathbb{R}^d$ drawn independently from the multivariate Gaussian distribution $\mathcal{N}(\mathbf{0},\mathbf{\Sigma})$,  with mean $\bv{0}$ and covariance $\bs{\Sigma}$. 
We assume that $\bs{\Sigma}$ is a shortest path covariance matrix (Definition \ref{def:graphStructure}) with the covariance between entries $i$ and $j$ only depends on the shortest path distance between nodes $v_i$ and $v_j$ in some graph $G$.
Our objective is to use samples from $\mathcal{N}(\bv 0, \bs \Sigma)$ to estimate $\Tilde{\bSigma} \in \mathbb{R}^{d \times d}$  such that with high probability ($\ge 1-\delta $  for some small $\delta>0$):
\begin{align}
    \left\|\bSigma -\Tilde{\bSigma}\right\|_2 \leq \epsilon 
    \left\| \bSigma \right\|_2. \label{obj}
\end{align}
In particular, we are interested in the sample complexity (both ESC and VSC) of the problem i.e. the minimum number of samples needed to achieve $ \epsilon$ error. We will not focus on computational complexity in particular, but the algorithms we present are computationally efficient, with low polynomial runtime.
\section{Main Results}\label{sec:main}
We now present our shortest path covariance estimation algorithm and give our main sample complexity bound. We start by introducing our primary technique for achieving low entry sample complexity -- sparse rulers on graph nodes.
\subsection{Sparse Rulers on Graphs}\label{sec:ruler}

We can see from Definition \ref{def:graphStructure} that in principle, to learn $\bs{\Sigma}$ it is enough to estimate the covariance $\bv{a}_s$ for all possible shortest path distances $s$ in the graph $G$. We will see how to do this by using just a small fraction of nodes in $G$ to represent all possible distances. 
%
As the nodes of the graph correspond to entries of the samples $\bv{x}^{(1)},\ldots, \bv{x}^{(n)}$, this allows us to access just a fraction of the entries in each sample and achieve entry sample complexity much lower than $d$ when estimating $\bs \Sigma$. 

We start with the simple observation that a graph with diameter $D$ has at least one pair of nodes $v_i,v_j$ whose shortest connecting path contains $D$ edges. The nodes along this path have shortest path distances ranging from $0$ to $D$. Further, we can measure all possible distances in this set  by choosing a subset of just $\Theta(\sqrt{D})$ nodes along this path according to a {sparse ruler}. 
Formally:


\begin{definition}
A subset $R \subseteq \{0,1,\ldots,D\}$ is a ruler if for all $s=\{0,\ldots, D \}$, there exist $j,k\in R $ so that $s=|j-k|$. We let $R_s= \{(j,k)\in R \times R :|j-k|=s \}$ be the ordered set of pairs in $R\times R$ with distance $s$. $R$ is called sparse if $|R| < D+1$.
\end{definition}

For any $D$, it is simple to construct a $\Theta(\sqrt{D})$ sparse ruler deterministically: 
\begin{proposition}\label{prop:sparse}
For any $D$, there is an explicit ruler $R$ of size $|R|= 2\lceil \sqrt{D} \rceil -1$.
\end{proposition}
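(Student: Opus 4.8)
The plan is to give an explicit construction based on the classic "base-$\sqrt D$" sparse ruler. Let $m = \lceil \sqrt D \rceil$, so that $m^2 \ge D$. I would define two blocks of marks: a "fine" block $A = \{0, 1, 2, \ldots, m-1\}$ consisting of $m$ consecutive integers, and a "coarse" block $B = \{m-1, 2(m-1), 3(m-1), \ldots, m(m-1)\}$ consisting of the multiples of $m-1$ up to $m(m-1)$, which has $m$ elements. Note $A$ and $B$ share the point $m-1$, so $R := A \cup B$ has $|R| = 2m - 1 = 2\lceil \sqrt D\rceil - 1$ marks. (One should also check the largest mark $m(m-1) = m^2 - m \ge D$ when $m \ge $ the relevant threshold; for the small cases where $m^2 - m < D \le m^2$ a minor adjustment — e.g. using multiples of $m$ rather than $m-1$, or adding the single mark $D$ and removing a redundant one — handles it, so I'd state the construction for $m = \lceil\sqrt D\rceil$ and note that at most a trivial modification keeps the size $2\lceil\sqrt D\rceil - 1$.)

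The key step is verifying that every distance $s \in \{0, 1, \ldots, D\}$ is realized as $|j - k|$ for some $j, k \in R$. Given such an $s$, write $s = q(m-1) + \rho$ by Euclidean division, where $0 \le \rho \le m - 2$ is the remainder and $0 \le q \le m$ (the bound on $q$ follows since $s \le D \le m^2 - 1 < m(m-1) + (m-1)$, so $q \le m$; one checks the boundary carefully). Now take $k = q(m-1) \in B$ (this lies in $B$ as long as $1 \le q \le m$; the case $q = 0$ means $s = \rho \le m-2 < m-1$, which is directly a difference within the fine block $A$) and $j = q(m-1) + \rho$. The point $j$ equals the mark $q(m-1) \in B$ plus an offset $\rho \in \{0, \ldots, m-2\} \subseteq A$ — but I need $j$ itself to be a single mark, not a sum. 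The correct bookkeeping is: the difference $s = q(m-1) + \rho$ is measured as $\big(q(m-1)\big) - \big({-}\rho\big)$ shifted appropriately — concretely, $j = q(m-1)$ from $B$ and then $\rho$ is obtained from $A$ as $\rho = \rho - 0$, and $s = \big(\text{mark } q(m-1) \in B\big) - \big(\text{mark } 0 \in A\big) + \big(\text{mark }\rho \in A\big) - \big(\text{mark }0\big)$ is NOT a single pairwise difference. So instead one uses the standard trick: place the fine ruler at the \emph{bottom} and the coarse ruler on \emph{top}, i.e. measure $s$ as the difference between a coarse mark and a fine mark. Precisely, I'd re-index so that $B$ contributes marks at positions $0, m-1, 2(m-1), \ldots, (m-1)^2$ (there are $m$ of them — using $m-1$ multiples plus $0$) and $A$ contributes marks at $(m-1)^2, (m-1)^2 + 1, \ldots, (m-1)^2 + (m-1)$ (another $m$ marks, overlapping at $(m-1)^2$), so $|R| = 2m - 1$ and the maximum mark is $(m-1)^2 + (m-1) = m(m-1) \ge D$ for $m$ large; then a distance $s = a(m-1) + \rho$ with $0 \le a \le m-1$, $0 \le \rho \le m-1$ is measured as $\big[(m-1)^2 + \rho\big] - \big[(m-1-a)(m-1)\big] = a(m-1) + \rho = s$, with the minuend a mark of $A$ and the subtrahend a mark of $B$. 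Since $s \le D \le m(m-1)$ guarantees such a representation exists (with $a \le m-1$), every distance is covered.

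The main obstacle is purely bookkeeping: getting the two block sizes, their overlap, and the off-by-one in the range of $q$ (or $a$) exactly right so that the total count is $2\lceil\sqrt D\rceil - 1$ \emph{and} the top mark is at least $D$ \emph{and} every $s \le D$ decomposes in range. I expect the cleanest writeup fixes $m = \lceil\sqrt D\rceil$, takes the coarse block to be the $m$ marks $\{0, (m-1), 2(m-1), \ldots, (m-1)^2\}$ and the fine block the $m$ marks $\{(m-1)^2, (m-1)^2+1, \ldots, (m-1)^2 + (m-1)\}$, notes the single shared mark $(m-1)^2$, so $|R| = 2m-1$, observes the largest distance measurable is $m(m-1) = m^2 - m \ge \lceil\sqrt D\rceil^2 - \lceil \sqrt D\rceil$, and finally checks this is $\ge D$ (true whenever $D \ge 2$, say, with the handful of tiny $D$ handled by inspection). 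Then the decomposition argument in the previous paragraph finishes it.
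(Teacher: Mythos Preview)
Your approach is the same classical two-block construction the paper uses: a ``fine'' block of $\lceil\sqrt D\rceil$ consecutive integers together with a ``coarse'' block of $\lceil\sqrt D\rceil$ arithmetic-progression marks, overlapping in one point. The paper simply states the set
\[
R=\{0,1,\ldots,m\}\;\cup\;\{D,\,D-m,\,\ldots,\,D-(m-2)m\},\qquad m=\lceil\sqrt D\rceil,
\]
and asserts it is a ruler of the claimed size, with no further argument. So conceptually you are doing exactly what the paper does; the only difference is that the paper anchors the coarse block at the \emph{top} (at $D$) while you anchor it at the bottom (at $0$).

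That difference, however, is where your write-up breaks. In your final ``cleanest'' version you take coarse marks $\{0,(m-1),\ldots,(m-1)^2\}$ and fine marks $\{(m-1)^2,\ldots,(m-1)^2+(m-1)\}$, and then assert that the maximal measurable distance $m(m-1)=m^2-m$ is $\ge D$ ``whenever $D\ge 2$''. This is false: for every $D$ in the range $m^2-m< D\le m^2$ (in particular every perfect square $D$, e.g.\ $D=4$ with $m=2$ gives $m(m-1)=2<4$) your ruler cannot realize the distance $D$ at all, since its largest mark is strictly below $D$. You flagged this earlier as needing a ``minor adjustment'' but then dropped it in the summary; it is not minor, because it is exactly the range of $D$ that forces the extra mark.

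The paper's anchoring avoids this pitfall automatically: by placing the coarse block at $D,D-m,\ldots$, the mark $D$ is always present, so the distance $D=D-0$ is covered for free, and the division-with-remainder argument for intermediate $s$ goes through with $k\le m-2$ since $D-s\le D-(m+1)\le m^2-m-1$. If you want to salvage your bottom-anchored version, switch the step to $m$ (not $m-1$) or, more simply, just adopt the paper's top-anchored set.
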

\begin{proof} 
We can observe that for any $D$, the set $R= \{0,1,\ldots, \lceil \sqrt{D} \rceil  \} \cup \{D,D-\lceil \sqrt{D} \rceil,\ldots,D-(\lceil \sqrt{D}\rceil -2)\lceil \sqrt{D} \rceil \}$, is a ruler and has size $|R|=2\lceil \sqrt{D} \rceil -1$. 
\end{proof}
We note that while Proposition \ref{prop:sparse}  suffices for our results, a large body of work has studied sparse ruler design, aimed at improving the constant $2$ in front of  $\lceil D \rceil$ in the ruler size \cite{caratelli2011novel,qin2015generalized,cohen2019sparse,leech1956representation,wichmann2002note}.  Plugging in such optimized rulers will directly translate to constant factor improvements in entry sample complexity. 

\begin{figure}%
    \centering
    \subfloat[\label{fig:gph}]{{\includegraphics[width=3.2cm]{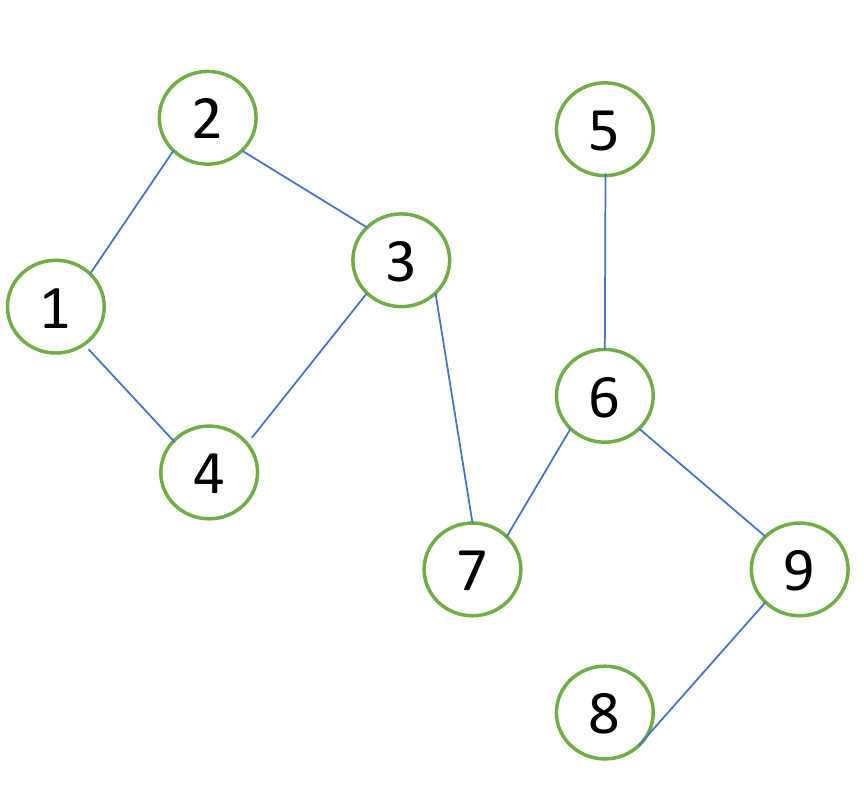} }}
    \qquad
    \subfloat[\label{fig:diam}]{{\includegraphics[width=3.2cm]{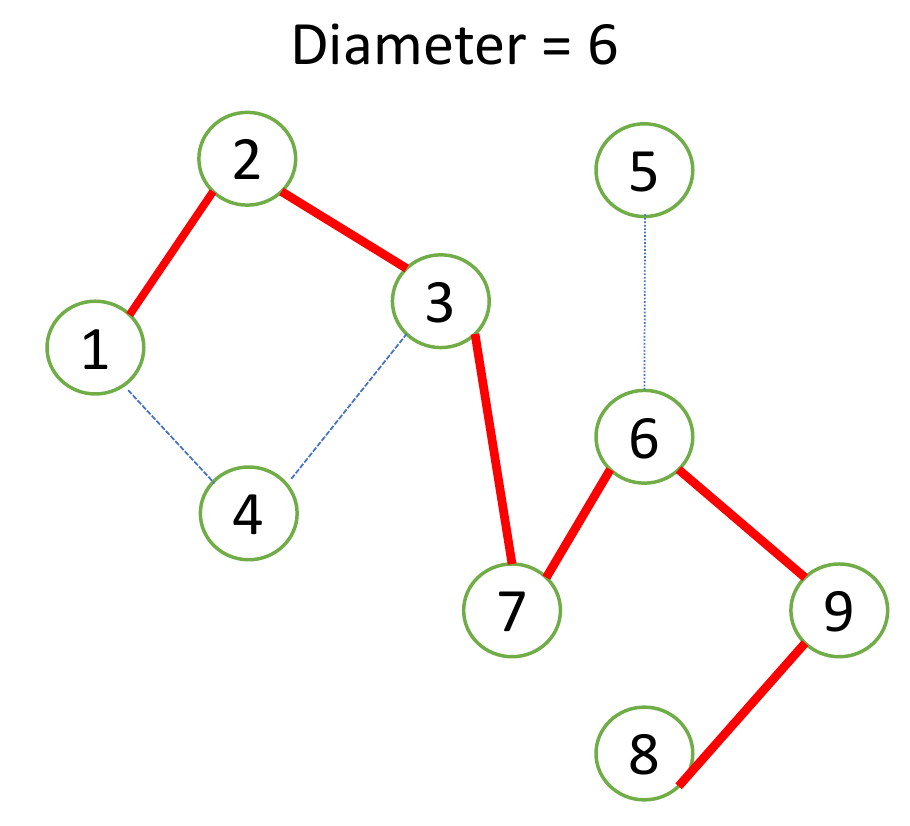} }}
    \qquad
    \subfloat[\label{fig:sparse}]{{\includegraphics[width=3.2cm]{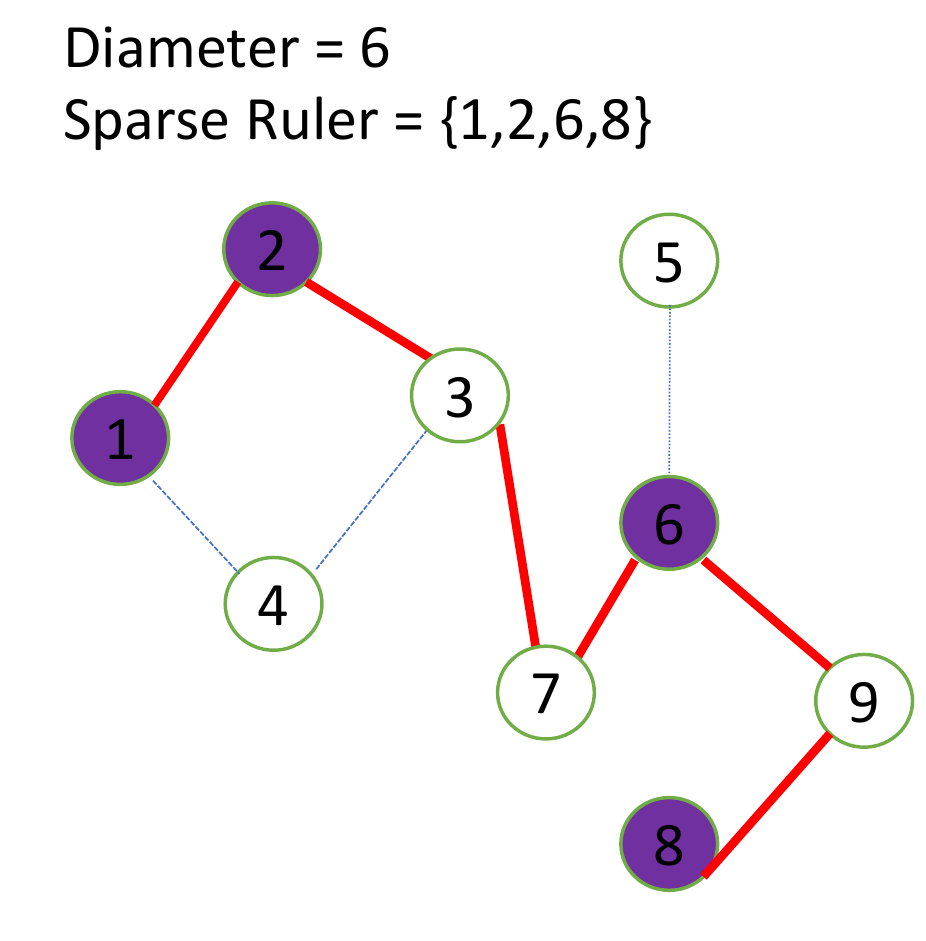} }}%
    \caption{\ref{fig:gph}: Underlying graph $G$. \ref{fig:diam}: The red path is a path with diameter $D=6$ edges. \ref{fig:sparse}: The purple highlighted nodes $\{1,2,6,8\}$ are the nodes in the sparse ruler, corresponding to positions $R = \{0,1,4,6\}$ on the path. The shortest path distances between these nodes cover all possible distances in $G$ since $R$ is a ruler for $D = 6$.}%
    \label{fig:graph}%
\end{figure}

In Figure~\ref{fig:graph} we illustrate how to use a sparse ruler to represent all shortest path distances on a graph using a small subset of nodes. The idea, formalized in Algorithm \ref{alg:sparse}, is simple: 1) label the nodes along any length-$D$ shortest path in order 2) take the nodes with labels corresponding to the indices in any sparse ruler $R$. The set of all shortest path distances between these $\Theta(\sqrt{D})$ nodes will include all distances in $\{0,\ldots,D\}$. Thus, we can use measurements at just these nodes to estimate all entries in $\bs{\Sigma}$. 

When e.g., $G$ is a clique, $\bs{\Sigma}$ has just two unique entries: one value $\bv{a}_0$ on diagonal and one value $\bv{a}_1$ off diagonal. We can just pick any two nodes to estimate these two values. When $G$ is just a path, with diameter $D=d-1$, a sparse ruler of $\Theta(\sqrt{d})$ entries can be applied to estimate the Toeplitz covariance matrix $\bs{\Sigma}$ \cite{toep}. Of course, while this technique will reduce our entry sample complexity, by taking measurements at fewer nodes, we obtain fewer overall samples. Thus, the challenge becomes bounding the required vector sample complexity of this method.

\begin{algorithm}[h]
\SetAlgoLined
 \textbf{Input: } Graph $G = (V,E)$.\\
  Calculate all pairs shortest path distances and the diameter $D$ of $G$.\\
  Let $P$ be the shortest path between any pair of nodes $v,u$ with shortest path distance $d(v,u) = D$.\\
  Let $v_0,v_1,\ldots, v_d$ be the nodes in $P$ listed in order from $v$ to $u$.\\
  Find a sparse ruler $R$ for $D$ (e.g., using Proposition \ref{prop:sparse}).\\
  Let $R^G \subseteq \{v_0,v_1,\ldots v_d\}$ be the set of nodes in $P$ indexed by the entries in $R$\\
  \noindent\textbf{Output: } $R^G \subseteq V$.
 \caption{Computing a Graph Sparse Ruler}
 \label{alg:sparse}
\end{algorithm}

\subsection{Shortest Path Covariance Estimation Algorithm}
As discussed, for a shortest path covariance matrix $\bs \Sigma$ with underlying graph $G$, for any $i,j$, $\bSigma_{i,j} = \ba_{s}$, where $s = d(v_i,v_j)$ is the shortest path distance between vertices $v_i$ and $v_j$ in $G$. Given a sparse ruler $R^G$ constructed as in Algorithm \ref{alg:sparse}, let:
\begin{align}
    R_s^G= \{(i,j)\in R^G \times R^G : d(v_i,v_j)=s\} \label{rs}.
\end{align}
For any $(i,j) \in R^G_s$ and $\x^{(l)} \sim \mathcal{N}(\bv 0, \bs \Sigma)$, $\mathbb{E}[\x^{(l)}_{i}\x^{(l)}_{j}] = \bv{a}_s$ -- that is, by considering the $i^{th}$ and $j^{th}$ entries of our sample, we obtain an unbiased estimate of the covariance at distance $s$. If we average $\x^{(l)}_{i}\x^{(l)}_{j}$ over sufficiently many independent samples $n$, we will thus obtain a good estimate $\bv{\tilde a}_s$ of $\bv{a}_s$. 
We can then construct an approximation to $\bs \Sigma$ using these covariance estimates.
 In particular, for $\bv{\tilde a} = [\bv{\tilde a}_0,\ldots,\bv{\tilde a}_D]$, let $\text{Graph-Cov}(G, \bv{\tilde a}) \in \mathbb{R}^{d \times d}$ be the shortest path covariance matrix with underlying graph $G$ and corresponding covariance values given by $\bv{\tilde a}$. Setting $\bs{\tilde \Sigma} = \text{Graph-Cov}(G, \bv{\tilde a})$, we will prove that $\bs{\tilde \Sigma}$ gives a good estimate of the true covariance $\bs \Sigma$.
 We formally describe our  method in Algorithm~\ref{alg:est}.

\begin{algorithm}[h]
\SetAlgoLined
 \textbf{Input: } Graph $G$ and independent samples $\x^{(1)},\ldots,\x^{(n)} \sim \mathcal{N}(0,\bSigma)$, where $\bs{\Sigma}$ is a shortest path covariance matrix (Def. \ref{def:graphStructure}) with underlying graph $G$.\\
Compute all pairs shortest path distances, diameter $D$, and a sparse ruler $R^G$ for $G$ using Algorithm \ref{alg:sparse}.\\
 \For {$s =0,\ldots, D $}{
 Choose any $(i,j)\in R^G_s$ (i.e., with $d(v_i,v_j) = s$).\\
 Set $\Tilde{\ba}_s = \frac{1}{n}\sum_{l=1}^n  \x^{(l)}_{i}\x^{(l)}_{j}$ . 
 }
 return $\Tilde{\bSigma}=\text{Graph-Cov}(\Tilde{\ba},G)$.
 \caption{Covariance Estimation with Sparse Ruler}
 \label{alg:est}
\end{algorithm}

\subsection{Sample Complexity Bound}

It is clear that the ESC of Algorithm 2 is $\Theta(\sqrt{D})$ where $D$ is the diameter of the underlying graph $G$. The challenge is in bounding the VSC $n$ necessary to estimate $\bs \Sigma$ up to $\epsilon$ error in the spectral norm. We provide this analysis in this section.

\begin{theorem}[Algorithm \ref{alg:est} Sample Complexity Bound]\label{thm:upper}
Consider any shortest path covariance matrix $\bSigma \in \mathbb{R}^{d \times d}$ with underlying graph $G$ and rank $r$. For any $\epsilon,\delta>0$, Algorithm \ref{alg:est} returns $\Tilde{\bSigma} $ such that with probability $\ge 1-\delta$,  $\|\bSigma- \Tilde{\bSigma}\|_2 \leq \epsilon\|\Tilde{\bSigma}\|_2 $, using entry sample complexity $\Theta(\sqrt{D})$ and vector sample complexity $n=O \left (\frac{r^2\log (\frac{D}{\delta} )}{\epsilon^2}\right )$,  where $D$ is the diameter of $G$.
\end{theorem}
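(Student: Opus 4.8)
The plan is to control the spectral-norm error $\|\bSigma - \tilde\bSigma\|_2$ by first bounding the per-entry error $|\ba_s - \tilde\ba_s|$ for each distance $s \in \{0,\ldots,D\}$, and then aggregating these entrywise errors into a spectral bound. For the entrywise step, I would fix a distance $s$ and a pair $(i,j)\in R^G_s$. Since $\x^{(l)} \sim \mathcal{N}(\bv 0,\bSigma)$, the product $\x^{(l)}_i \x^{(l)}_j$ has mean exactly $\ba_s$, so $\tilde\ba_s = \frac1n\sum_l \x^{(l)}_i\x^{(l)}_j$ is unbiased. The products $\x^{(l)}_i\x^{(l)}_j$ are i.i.d.\ across $l$, and each is a product of two jointly Gaussian variables, hence sub-exponential; I would verify the Bernstein condition \eqref{eq:bern} for $X = \x^{(l)}_i\x^{(l)}_j$ using Isserlis's theorem (Theorem \ref{thm:iss}) to compute the moments $\E[(X-\ba_s)^k]$, bounding them in terms of $\sigma_{ii}\sigma_{jj} \le \|\bSigma\|_2^2$. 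This gives sub-exponential parameters $(\tau^2,\nu) = (c_1\|\bSigma\|_2^2, c_2\|\bSigma\|_2)$ for absolute constants. Applying Theorem \ref{thm:bern} and a union bound over all $D+1$ distances $s$, with $n = O(\log(D/\delta)/\eta^2)$ samples we get $|\ba_s - \tilde\ba_s| \le \eta\|\bSigma\|_2$ simultaneously for all $s$, with probability $\ge 1-\delta$.

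Next I would convert the entrywise bound into a spectral bound on $\bv E \eqdef \bSigma - \tilde\bSigma = \text{Graph-Cov}(G, \ba - \tilde\ba)$. The naive bound $\|\bv E\|_2 \le \|\bv E\|_F \le d\cdot \max_s|\ba_s-\tilde\ba_s|$ would only give VSC scaling with $d^2$ (and matches the informal Theorem \ref{thm:mainIntro}), but to get the rank-$r$ dependence claimed here I need to exploit that $\bSigma$ has rank $r$. The key structural observation is that $\bv E$ is itself a shortest path covariance-structured matrix for the \emph{same} graph $G$ — it is a linear combination $\bv E = \sum_{s=0}^{D}(\tilde\ba_s - \ba_s) \bv B_s$, where $\bv B_s$ is the $0/1$ indicator matrix of pairs at distance exactly $s$, and $\sum_s \bv B_s = \bv J$ (all-ones). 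I would argue that because $\bSigma = \sum_s \ba_s \bv B_s$ has rank $r$, the matrices $\bv B_s$ effectively live in a low-dimensional space, or more directly: since $\bSigma$ has rank $r$, only $r$ of its "distance levels" can be linearly independent in an appropriate sense, letting me bound $\|\bv E\|_2$ by roughly $r \cdot \max_s |\ba_s - \tilde\ba_s| \cdot \max_s \|\bv B_s\|_2 / (\text{something})$ — this is the step requiring care. One clean route: write $\bSigma = \bv U \bv\Lambda \bv U^T$ with $\bv U \in \R^{d\times r}$, observe $\|\bSigma\|_2 \ge \frac1r Tr(\bSigma) = \frac{d\cdot \ba_0}{r}$ so $\ba_0 \le \frac{r}{d}\|\bSigma\|_2$, and relate $\|\bv B_s\|_2$ and $\|\bv E\|_2$ to the trace / Frobenius norm scaled by the rank, yielding $\|\bv E\|_2 \lesssim r \max_s|\ba_s-\tilde\ba_s|$.

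Putting the pieces together: setting $\eta = \epsilon / (c\, r)$ for the appropriate constant $c$ from the spectral aggregation step, the entrywise guarantee $|\ba_s - \tilde\ba_s| \le \eta \|\bSigma\|_2$ gives $\|\bv E\|_2 \le \epsilon\|\bSigma\|_2$, which requires $n = O(\log(D/\delta)/\eta^2) = O(r^2 \log(D/\delta)/\epsilon^2)$ vector samples. Finally, the statement asks for error relative to $\|\tilde\bSigma\|_2$ rather than $\|\bSigma\|_2$; I would handle this with a standard rescaling argument — if $\|\bSigma-\tilde\bSigma\|_2 \le \epsilon'\|\bSigma\|_2$ with $\epsilon' = \epsilon/2$, then by the triangle inequality $\|\bSigma\|_2 \le \frac{1}{1-\epsilon'}\|\tilde\bSigma\|_2 \le 2\|\tilde\bSigma\|_2$, so $\|\bSigma - \tilde\bSigma\|_2 \le \epsilon\|\tilde\bSigma\|_2$, absorbing the factor $2$ into the constant.

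I expect the main obstacle to be the second step — turning the uniform entrywise error bound into a spectral-norm bound with the correct $r$ (not $d$) dependence. The subtlety is that $\bv E$'s entries are all bounded by $\eta\|\bSigma\|_2$ but $\bv E$ is a full $d\times d$ matrix, so a purely "count the entries" argument loses a factor of $d$; the rank-$r$ structure of $\bSigma$ must be invoked precisely, presumably through a relationship between the distance-level structure and the eigenspace of $\bSigma$, or via the fact that the number of distinct distances that "matter" for a rank-$r$ shortest path covariance matrix is itself $O(r)$. Getting this dependence tight, and in a way that only uses shortest-path structure (as the authors advertise), is the crux.
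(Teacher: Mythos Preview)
Your overall architecture is right, and you even write down the key inequality $\ba_0 \le \frac{r}{d}\|\bSigma\|_2$ from the trace; the gap is that you try to inject the rank $r$ at the wrong stage. Your proposed aggregation bound $\|\bv E\|_2 \lesssim r\cdot\max_s |\ba_s-\tilde\ba_s|$ is not provable from shortest-path structure alone: $\bv E = \sum_s e_s \bv B_s$ with $|e_s|\le\eta$ can be as bad as $\eta\bv J$ (take all $e_s=\eta$), which has spectral norm $\eta d$, not $\eta r$. The fact that $\bSigma$ has rank $r$ puts no useful constraint on $\bv E$, since $\tilde\bSigma$ has no rank bound. Your own suspicion that this step is ``the crux'' is correct --- as written it fails.

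The paper's proof uses the rank in the \emph{concentration} step, not the aggregation step. Since every diagonal entry of $\bSigma$ equals $\ba_0$ and $|\bSigma_{i,j}|\le 2\ba_0$, the Isserlis/Bernstein calculation yields sub-exponential parameters $\tau,\nu = O(\ba_0)$, not $O(\|\bSigma\|_2)$. Then the trace bound $\ba_0 \le \frac{r}{d}\|\bSigma\|_2$ gives $\tau,\nu = O\!\left(\frac{r}{d}\|\bSigma\|_2\right)$. With these tighter parameters, $n = O(r^2\log(D/\delta)/\epsilon^2)$ samples already give entrywise error $|\tilde\ba_s-\ba_s|\le \frac{\epsilon}{d}\|\bSigma\|_2$ (note the $1/d$, not $1/r$). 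At that point the \emph{naive} Frobenius aggregation $\|\bv E\|_2 \le \|\bv E\|_F \le d\cdot\frac{\epsilon}{d}\|\bSigma\|_2 = \epsilon\|\bSigma\|_2$ is exactly enough. So: keep your plan, but move the inequality $\ba_0\le \frac{r}{d}\|\bSigma\|_2$ from step~2 into your sub-exponential parameter bound in step~1, and then use the crude Frobenius bound you were tempted to discard.
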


\begin{proof}
Our proof will show that each covariance estimate $\bv{\tilde a}_s$ is close to the true covariance $\ba_s$, and in turn that $\bs{\tilde \Sigma}$ is close to $\bs{\Sigma}$.
First we prove that each term $\x^{(l)}_{i}\x^{(l)}_{j}$ for all $i,j \in [d] $ and $l \in [n]$ satisfies the Bernstein condition (Fact \ref{thm:bern}) and thus behaves sub-exponentially (Def. \ref{def:subexp}).

We first consider the moment $\left | \mathbb{E}\left [ \left (\x^{(l)}_{i}\x^{(l)}_{j} - \E[\x^{(l)}_{i}\x^{(l)}_j] \right)^k \right ] \right |$, which we must bound to give the Bernstein condition. Note that $\E[\x^{(l)}_{i}\x^{(l)}_j]  = \bSigma_{i,j}$. 
Thus, expanding this moment out yields  $2^k$ terms, each of the form   $\E\left [(\x^{(l)}_i\x^{(l)}_j)^m \right ] \cdot \bs{\Sigma}_{i,j}^{k-m}$. We can write:
\begin{align*}
  (\x^{(l)}_i\x^{(l)}_j)^m =\underbrace{(\x^{(l)}_i\ldots \x^{(l)}_i)}_{m \text{ times }}\underbrace{(\x^{(l)}_j\ldots \x^{(l)}_j)}_{m \text{ times }}. 
\end{align*}
Using Isserlis's Theorem (Theorem~\ref{thm:iss}) we thus have: 
\begin{align}
    \mathbb{E}[(\x^{(l)}_{i}\x^{(l)}_{j})^m]= \sum_{p \in P^2_{2m}} \prod_{(z,w)\in p} c_{z,w} 
\end{align}
where $c_{z,w} = \bs \Sigma_{i,i} = \bs \Sigma_{j,j} = \bv{a}_0$ if both $z$ and $w$ are either $\le k$ or $> k$, and $c_{z,w} = \bs{\Sigma}_{i,j}$ if $z \le k$ and $w > k$ or vice-versa. By the fact that $\bs \Sigma$ is positive semidefinite we have $|\bSigma_{i,j}| \le \bSigma_{i,i} + \bSigma_{j,j} \leq 2 \ba_0$. Thus, each term $c_{z,w}$ is bounded in magnitude by $2\ba_0$. Additionally, the number of possible pairing in $P_{2m}^2$ is $\frac{2m!}{m! \cdot 2^m} \le 2^m m! \le 2^m k!$. 
\begin{align*}
   \left | \mathbb{E}\left[(\x^{(l)}_i\x^{(l)}_j)^m \right] \right |
    & \leq 2^m k! (2\ba_0)^m \nonumber\\
    & \leq k!(4\ba_0)^m. 
\end{align*}
Thus we have each of the $2^k$ terms of $ \mathbb{E}\left [ \left (\x^{(l)}_{i}\x^{(l)}_{j} - \E[\x^{(l)}_{i}\x^{(l)}_j] \right)^k \right ]$, of the form $\E\left [(\x^{(l)}_i\x^{(l)}_j)^m\right ] \cdot \bs{\Sigma}_{i,j}^{k-m}$ bounded in magnitude by $k!(4\ba_0)^m \cdot \bs{\Sigma}_{i,j}^{k-m} \le k! (4\ba_0)^k$. So overall, via triangle inequality 
\begin{align}\label{bern}
\left |  \mathbb{E}\left [ \left (\x^{(l)}_{i}\x^{(l)}_{j} - \E[\x^{(l)}_{i}\x^{(l)}_j] \right)^k \right ] \right | \le 2^k \cdot k!\cdot(4\ba_0)^k = k!\cdot(8\ba_0)^k.
\end{align}

Now we can bound $\bv{a}_0$ (the diagonal value of $\bs \Sigma$) by using that $\bs \Sigma$ has rank $r$. In particular letting $\lambda_i(\bSigma)$ denote the $i^{th}$ eigenvalue of $\bSigma$:
\begin{align}
Tr(\bSigma) = \sum_{i=1}^r \lambda_i(\bSigma) & \Rightarrow d \cdot \ba_0 \leq r \left\|\bSigma \right\|_2 \Rightarrow \ba_0 \leq \frac{r}{d} \left\|\bSigma \right\|_2. \label{rank}
\end{align}
Plugging the bound of \eqref{rank} into \eqref{bern} we have:
\begin{align*}
    \mathbb{E}\left[(\x^{(l)}_i\x^{(l)}_j)^k \right] \leq k! \cdot \left(\frac{8r}{d} \left\|\bSigma \right\|_2 \right)^k \le \frac{1}{2}k! \cdot \left(\frac{8\sqrt{2} r}{d} \left\|\bSigma \right\|_2 \right)^2\left(\frac{8r}{d} \left\|\bSigma \right\|_2 \right)^{k-2}
\end{align*}
So each $(\x^{(l)}_{i}\x^{(l)}_{j})$ is sub-exponential with mean $\mu=\bSigma_{i,j}$ and  parameters $\tau= \nu=\frac{16 r}{d}\left\|\bSigma \right\|_2$ by the Bernstein condition (Theorem~\ref{thm:bern}).
Applying the sub-exponential concentration bound of Theorem \ref{thm:bern} we have for each estimate $\Tilde{\ba}_s$:
\begin{align*}
\mathbb{P} \left[ \left|\Tilde{\ba}_{s}-\ba_{s} \right|\geq t \right]
 &= P\left[\left|\frac{1}{n}\sum_{l=1}^n  \x^{(l)}_{i}\x^{(l)}_{j}-\ba_s \right| \geq t \right] \leq \exp \left(-\frac{n}{2}\min \left  \{\frac{t^2}{\tau^2},\frac{t}{\tau}\right \}\right).
\end{align*}

Now applying union bound over all $s=0,1,\ldots,D$ we have that the above bound holds with probability $(D+1)\exp \left(-\frac{n}{2}\min \{\frac{t^2}{\tau^2},\frac{t}{\tau}\}\right)$ for all estimates in $\Tilde{\ba}$ simultaneously. If we set $n \geq  2\log \left(\frac{D+1}{\delta}\right) \max \left \{\frac{\tau^2}{t^2},\frac{\tau}{t}\right \}$ we have:
\begin{align*}
  &  (D+1)\exp \left(-\frac{n}{2}\min \{\frac{t^2}{\tau^2},\frac{t}{\tau}\} \right) \leq \delta.
\end{align*}
Choosing $t= \frac{\epsilon \cdot \|\bSigma\|_2}{d}$ and recalling that $\tau=\frac{16r}{d}\left\|\bSigma \right\|_2 $, we thus have that for $n=O\left(\frac{r^2}{\epsilon^2} \cdot \log(\frac{D}{\delta}) \right)$, with probability $\ge 1-\delta$, s
$\left|\Tilde{\bSigma}_{i,j}-\bSigma_{i,j} \right| \le \frac{\epsilon \cdot \|\bSigma\|_2}{d}$ simultaneously for all $i,j$. Finally this entrywise bound gives: 
\begin{align*}
\|\Tilde{\bSigma}-\bSigma\|_2 &\leq  \|\Tilde{\bSigma}-\bSigma\|_F\\
 &= \sqrt{ \sum_{i,j} |\Tilde{\bSigma}_{i,j}-\bSigma_{i,j}|^2} \\
    & \leq d \cdot \frac{\epsilon \cdot \|\bSigma\|_2}{d} = \epsilon\|\bSigma\|_2,
\end{align*}
with probability at least $1-\delta$, completing the theorem.
\end{proof}
Plugging in full-rank $r =d$ to Theorem \ref{thm:upper} immediately gives:
\begin{corollary}\label{cor:lowrank}
For any shortest path covariance matrix $\bSigma$,  Algorithm \ref{alg:est} returns $\Tilde{\bSigma} $ such that $\|\bSigma- \Tilde{\bSigma}\|_2 \leq \epsilon\|\Tilde{\bSigma}\|_2 $ with probability $\ge 1-\delta$, using entry sample complexity $\Theta(\sqrt{D})$  and vector sample complexity $O(\frac{d^2\log (\frac{D}{\delta})}{\epsilon^2})$.
 \end{corollary}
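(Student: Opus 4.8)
The plan is to verify the three claimed properties in turn: entry sample complexity, correctness of the estimator, and vector sample complexity. Since Corollary~\ref{cor:lowrank} is just Theorem~\ref{thm:upper} with $r = d$, the entire burden is to establish Theorem~\ref{thm:upper} and then substitute. I would begin by observing that the ESC claim is essentially definitional: Algorithm~\ref{alg:est} only ever reads entries indexed by the graph sparse ruler $R^G$ returned by Algorithm~\ref{alg:sparse}, and by Proposition~\ref{prop:sparse} we have $|R^G| = |R| = 2\lceil \sqrt{D}\rceil - 1 = \Theta(\sqrt D)$. One must also check that $R^G$ actually suffices — i.e., that for every distance $s \in \{0,\ldots,D\}$ the set $R_s^G$ in~\eqref{rs} is nonempty — which follows because the nodes $v_0,\ldots,v_D$ lie on a shortest path of length $D$, so $d(v_j,v_k) = |j-k|$ for indices on that path, and $R$ being a ruler covers every $s$.

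The core of the argument is a concentration bound on each estimate $\tilde{\ba}_s = \frac1n \sum_{l=1}^n \x^{(l)}_i \x^{(l)}_j$. The key step, which I expect to be the main obstacle, is showing that the product $\x^{(l)}_i \x^{(l)}_j$ of two correlated Gaussians is sub-exponential with a good parameter — specifically with $\tau = \nu = O\!\left(\frac{r}{d}\|\bSigma\|_2\right)$. To do this I would verify the Bernstein condition~\eqref{eq:bern}: expand the centered $k$-th moment $\E[(\x^{(l)}_i\x^{(l)}_j - \bSigma_{i,j})^k]$ into $2^k$ terms of the form $\E[(\x^{(l)}_i\x^{(l)}_j)^m]\cdot \bSigma_{i,j}^{k-m}$, then apply Isserlis's Theorem (Theorem~\ref{thm:iss}) to each mixed moment $\E[(\x^{(l)}_i\x^{(l)}_j)^m]$, which expands into a sum over pairings of $2m$ indices, each factor being one of $\ba_0$ (for same-variable pairs) or $\bSigma_{i,j}$ (for cross pairs). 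Bounding $|\bSigma_{i,j}| \le \bSigma_{i,i}+\bSigma_{j,j} = 2\ba_0$ by positive semidefiniteness, and counting the $\frac{(2m)!}{m!2^m} \le 2^m m!$ pairings, gives a clean $k!(8\ba_0)^k$-type bound on the centered moment. The crucial quantitative input is then the rank bound~\eqref{rank}: since $\mathrm{Tr}(\bSigma) = d\,\ba_0 = \sum_{i=1}^r \lambda_i(\bSigma) \le r\|\bSigma\|_2$, we get $\ba_0 \le \frac{r}{d}\|\bSigma\|_2$, which converts the moment bound into the Bernstein form with $\sigma^2, b = O(\frac{r}{d}\|\bSigma\|_2)$, hence sub-exponential parameters $\tau = \nu = O(\frac{r}{d}\|\bSigma\|_2)$.

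With sub-exponentiality in hand, the rest is routine: apply the sub-exponential concentration bound (Theorem~\ref{thm:bern}) to each $\tilde\ba_s$, union bound over the $D+1$ values of $s$, and choose the per-entry error target $t = \frac{\epsilon\|\bSigma\|_2}{d}$. Plugging $\tau = O(\frac{r}{d}\|\bSigma\|_2)$ into the requirement $n \ge 2\log(\frac{D+1}{\delta})\max\{\tau^2/t^2,\tau/t\}$ yields $n = O\!\left(\frac{r^2\log(D/\delta)}{\epsilon^2}\right)$, and on this event every entry of $\tilde\bSigma$ satisfies $|\tilde\bSigma_{i,j} - \bSigma_{i,j}| \le \frac{\epsilon\|\bSigma\|_2}{d}$ (note $\tilde\bSigma$ and $\bSigma$ share the same graph structure, so a per-distance bound transfers to every entry). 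Finally I would pass from the entrywise bound to the spectral norm via $\|\tilde\bSigma - \bSigma\|_2 \le \|\tilde\bSigma - \bSigma\|_F \le \sqrt{d^2 \cdot (\epsilon\|\bSigma\|_2/d)^2} = \epsilon\|\bSigma\|_2$. Setting $r = d$ then gives Corollary~\ref{cor:lowrank} with VSC $O\!\left(\frac{d^2\log(D/\delta)}{\epsilon^2}\right)$ and ESC $\Theta(\sqrt D)$. A minor point to be careful about is the statement's use of $\|\tilde\bSigma\|_2$ rather than $\|\bSigma\|_2$ on the right-hand side; since the error is small relative to $\|\bSigma\|_2$, the two norms are within a $(1\pm\epsilon)$ factor, so the bound in terms of $\|\tilde\bSigma\|_2$ follows from the one in terms of $\|\bSigma\|_2$ up to adjusting constants.
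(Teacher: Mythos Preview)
Your proposal is correct and follows exactly the paper's approach: the corollary is obtained by invoking Theorem~\ref{thm:upper} and substituting $r=d$, and your outline of Theorem~\ref{thm:upper} (Bernstein condition via Isserlis's theorem on $(\x^{(l)}_i\x^{(l)}_j)^m$, the trace bound $\ba_0 \le \frac{r}{d}\|\bSigma\|_2$, sub-exponential concentration with a union bound over the $D+1$ distances, and the final $\|\cdot\|_2 \le \|\cdot\|_F$ step) matches the paper's argument step for step. Your remark about $\|\tilde\bSigma\|_2$ versus $\|\bSigma\|_2$ on the right-hand side is also apt; the paper actually proves the bound with $\|\bSigma\|_2$ and the stated form is a cosmetic discrepancy.
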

\begin{remark}
Notice that the bound of the Theorem~\ref{thm:upper} actually upper bounds the Frobenius norm $\|\Tilde{\bSigma}-\bSigma\|_F$, which is only larger than the spectral norm. We believe that the given bound is tight for the Frobenius norm, and establishing a tighter bound for the spectral norm error (without going through the Frobenious norm) to match the lower bound of Section \ref{sec:lower} is an interesting problem. This is what is done, e.g., in \cite{toep}, which matches our lower bound in the special case when $G$ is a path and so $\bSigma$ is Toeplitz.
\end{remark}

\begin{remark}
If we consider a rank $r\leq \sqrt{d}$ Toeplitz covariance matrix (when $G$ is a path graph), the bound of Theorem \ref{thm:upper} slightly improves on the bound of Theorem 2.8 in \cite{toep}. We note that our approach is much simpler and more general -- using nothing specific about Toeplitz structure, beyond that it is a special case of a shortest path covariance.
\end{remark}

We can also give a simple extension to the case when $\bSigma$ is near low-rank in that its spectrum is dominated by at most $r < d$ large eigenvalues.
\begin{theorem}[Algorithm \ref{alg:est} Near Low-Rank Sample Complexity Bound]\label{thm:nearlow}
Consider any shortest path covariance matrix $\bSigma \in \mathbb{R}^{d \times d}$ with underlying graph $G$ which is near rank-$r$ i.e.,
\begin{align}
    Tr(\bSigma) \leq r \left\|\bSigma \right\|_2 + \xi\left\|\bSigma \right\|_F \label{eq: nearlow}
\end{align}
for small $\xi>0$. For any $\epsilon,\delta>0$, Algorithm \ref{alg:est} returns $\Tilde{\bSigma} $ such that with probability $\ge 1-\delta$,  $\|\bSigma- \Tilde{\bSigma}\|_2 \leq \epsilon\|\Tilde{\bSigma}\|_2 $, using entry sample complexity $\Theta(\sqrt{D})$ and vector sample complexity $n=O \left (\frac{\tilde{r}^2\log (\frac{D}{\delta} )}{\epsilon^2}\right )$,  where $D$ is the diameter of $G$ and $\tilde{r}= r + \frac{\xi\left\|\bSigma \right\|_F}{\left\|\bSigma \right\|_2 }$.
\end{theorem}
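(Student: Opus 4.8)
The plan is to observe that the proof of Theorem~\ref{thm:upper} invokes the exact-rank hypothesis at exactly one point, and that this point admits an immediate near-low-rank substitute, after which the rest of the argument goes through verbatim with $r$ replaced by $\tilde r = r + \frac{\xi\left\|\bSigma\right\|_F}{\left\|\bSigma\right\|_2}$. Concretely, in Theorem~\ref{thm:upper} the parameter $r$ enters only through inequality~\eqref{rank}, namely $d\cdot\ba_0 = Tr(\bSigma) \le r\left\|\bSigma\right\|_2$, hence $\ba_0 \le \frac{r}{d}\left\|\bSigma\right\|_2$. All the preceding steps (the Isserlis/Bernstein moment expansion showing each $\x^{(l)}_i\x^{(l)}_j$ is sub-exponential, the bound $|\bSigma_{i,j}|\le 2\ba_0$ from positive semidefiniteness, the pairing count $\frac{(2m)!}{m!2^m}\le 2^m k!$) use only that $\bSigma$ is a PSD shortest path covariance matrix, with no reference to its rank, so they remain valid as stated.

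First I would replace~\eqref{rank} by the near-low-rank estimate. Since every diagonal entry of $\bSigma$ equals $\ba_0$ we still have $Tr(\bSigma) = d\cdot\ba_0$, so condition~\eqref{eq: nearlow} gives $d\cdot\ba_0 \le r\left\|\bSigma\right\|_2 + \xi\left\|\bSigma\right\|_F = \left(r + \frac{\xi\left\|\bSigma\right\|_F}{\left\|\bSigma\right\|_2}\right)\left\|\bSigma\right\|_2 = \tilde r\left\|\bSigma\right\|_2$, where $\tilde r$ is well-defined as we may assume $\bSigma\neq\bv 0$ (the claim is trivial otherwise). Hence $\ba_0 \le \frac{\tilde r}{d}\left\|\bSigma\right\|_2$, which is precisely~\eqref{rank} with $r$ replaced by $\tilde r$.

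Then I would run the remainder of the proof of Theorem~\ref{thm:upper} with this substitution: plugging $\ba_0 \le \frac{\tilde r}{d}\left\|\bSigma\right\|_2$ into~\eqref{bern} shows each $\x^{(l)}_i\x^{(l)}_j$ is sub-exponential with mean $\bSigma_{i,j}$ and parameters $\tau = \nu = \frac{16\tilde r}{d}\left\|\bSigma\right\|_2$; the concentration bound of Theorem~\ref{thm:bern} together with a union bound over $s=0,\dots,D$ gives $|\Tilde{\ba}_s - \ba_s|\le t$ for all $s$ simultaneously with probability $\ge 1-\delta$ as soon as $n \ge 2\log\!\left(\frac{D+1}{\delta}\right)\max\{\tau^2/t^2,\ \tau/t\}$; and choosing $t = \frac{\epsilon\left\|\bSigma\right\|_2}{d}$ yields $n = O\!\left(\frac{\tilde r^2\log(D/\delta)}{\epsilon^2}\right)$ with entrywise error $|\Tilde{\bSigma}_{i,j}-\bSigma_{i,j}|\le \frac{\epsilon\left\|\bSigma\right\|_2}{d}$, which summed over the $d^2$ entries gives $\|\Tilde{\bSigma}-\bSigma\|_2 \le \|\Tilde{\bSigma}-\bSigma\|_F \le \epsilon\left\|\bSigma\right\|_2$. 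The entry sample complexity is $\Theta(\sqrt D)$ exactly as before, since the set of entries read is unchanged. There is no real obstacle here; the only thing to verify carefully is that $r$ genuinely appears in the Theorem~\ref{thm:upper} argument solely via~\eqref{rank} and nowhere implicitly, which is the case as noted above. One could additionally remark, for interpretation, that $\tilde r$ reduces to the true rank when $\xi=0$ and more generally measures an effective rank governed by the spectral decay of $\bSigma$, recovering Corollary~\ref{cor:lowrank}/Theorem~\ref{thm:upper} as special cases.
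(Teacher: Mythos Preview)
Your proposal is correct and follows exactly the same approach as the paper: observe that the rank hypothesis in Theorem~\ref{thm:upper} is used only at~\eqref{rank}, replace the bound $d\ba_0 \le r\|\bSigma\|_2$ by the near-low-rank condition~\eqref{eq: nearlow} to obtain $\ba_0 \le \frac{\tilde r}{d}\|\bSigma\|_2$, and then invoke the remainder of the Theorem~\ref{thm:upper} argument with $\tilde r$ in place of $r$. Your write-up is in fact more detailed than the paper's, which simply states the chain of inequalities leading to $\ba_0 \le \frac{\tilde r}{d}\|\bSigma\|_2$ and then says ``the rest follows from the proof of Theorem~\ref{thm:upper}.''
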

\begin{proof}
Consider the characterization of near low rank matrix in the equation~\eqref{eq: nearlow} and using  it in ~\eqref{rank} in the proof of Theorem~\ref{thm:upper} we have
\begin{align*}
     & Tr(\bSigma)  \leq r \left\|\bSigma \right\|_2 + \xi\left\|\bSigma \right\|_F  \\
 \Rightarrow &    d\ba_0  \leq r \left\|\bSigma \right\|_2 + \xi\left\|\bSigma \right\|_F \\
 \Rightarrow &  \ba_0  \leq \frac{1}{d}\left(r + \frac{\xi\left\|\bSigma \right\|_F}{ \left\|\bSigma \right\|_2} \right) \left\|\bSigma \right\|_2 \\
 \Rightarrow &  \ba_0  \leq \frac{\tilde{r}}{d}\left\|\bSigma \right\|_2.
\end{align*}

 The rest follows from the proof of Theorem~\ref{thm:upper}.
\end{proof}

\section{Lower Bound}\label{sec:lower}
We now  prove a simple lower bound on the sample complexity of shortest path covariance estimation when a fixed subset of entries $S$ (e.g., corresponding to a sparse ruler) are read in each sample. We prove the lower bound via a reduction to a property testing problem. 
Consider the class of distributions  $\mathcal{C}= \left\{\mathcal{N}(0,\bSigma): \bSigma \text{ is a shortest path covariance} \right\}$ and a particular property $\mathcal{P}$, which is defined by a subset of distributions over $\R^d$.
Consider a (deterministic) testing algorithm $\mathcal{T}_n$ for $\mathcal{P}$ which takes $\x^{(1)},\ldots, \x^{(n)}$ independent samples from some  $C \in \mathcal{C}$ as input and outputs \{ACCEPT, REJECT\}. The testing algorithm fails if the algorithm outputs ACCEPT on a \textit{no} instance (i.e the property does not hold for $C$) and REJECT on a \textit{yes} instance (i.e the property  holds for $C$). Let $\mathcal{D}_1 \subset \mathcal{C}$ and $\mathcal{D}_2 \subset \mathcal{C} $ be the families of \textit{yes} and \textit{no} instances respectively. 
We can formally bound the failure probability of any such testing algorithm using e.g., Corollary D.5.8 of \cite{canonne2015survey}:
\begin{theorem}\label{thm:lecam}
Fix $\epsilon \in (0,1)$ and a property $\mathcal{P}$. Let $\mathcal{D}_1,\mathcal{D}_2 \subseteq \mathcal{C}$  be the families of \textit{yes} and \textit{no} instances respectively such that $\mathcal{D}_1 \subseteq \mathcal{P}$ , while any $D \in \mathcal{D}_2$ and $D' \in \mathcal{P}$ have $\left\|D-{D}' \right\|_{TV} \geq \epsilon$. For all $n\geq 1$,
\begin{align}
    \inf_{\text{algorithms }\mathcal{T}_n} \sup_{C\in \mathcal{C}} \displaystyle\mathbb{P}_{\x^{(1)},\ldots, \x^{(n)} \sim C }\left[\mathcal{T}_n(\x^{(1)},\ldots, \x^{(n)}) \text{ fails}) \right] \geq \frac{1}{2} \left(1- \inf_{ \substack{D_1 \in conv_n(\mathcal{D}_1) \\ D_2 \in conv_n(\mathcal{D}_2)} }\left\|D_1-D_2\right\|_{TV} \right) , \label{lblecam}
\end{align}
where $conv_n$ denotes the convex hull of a family of $n$-fold  product distributions defined as:
\begin{align*}
  conv_n(\mathcal{D})\eqdef \left\{ \sum_{i=1}^k \alpha_iD_i^{\otimes n}: k\geq 1 ;D_1,\ldots,D_k \in \mathcal{D};\alpha_1,\ldots\alpha_k \geq 0,\sum_{i=1}^k \alpha_i=1 \right\}.  
\end{align*}

\end{theorem}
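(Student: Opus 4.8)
The plan is to run the classical Le~Cam two-point argument, but carried out against \emph{mixtures} of product distributions rather than single distributions, which is exactly what the convex hulls $conv_n(\mathcal{D}_1)$ and $conv_n(\mathcal{D}_2)$ encode. First I would fix an arbitrary deterministic tester $\mathcal{T}_n$. Being a (measurable) deterministic function of $(\x^{(1)},\ldots,\x^{(n)}) \in (\R^d)^n$, it induces a partition of the sample space into an accept region $A$ and a reject region $R = A^c$. On a \emph{yes} instance $C \in \mathcal{D}_1$ the tester fails precisely when the samples land in $R$, so its failure probability equals $C^{\otimes n}(R)$; on a \emph{no} instance $C \in \mathcal{D}_2$ it fails precisely when the samples land in $A$, so its failure probability equals $C^{\otimes n}(A)$. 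The hypotheses $\mathcal{D}_1 \subseteq \mathcal{P}$ and $\|D - D'\|_{TV} \ge \epsilon$ for all $D \in \mathcal{D}_2$, $D' \in \mathcal{P}$ are used only to guarantee $\mathcal{D}_1 \cap \mathcal{D}_2 = \emptyset$, so that ``failure'' is well defined for every $C \in \mathcal{C}$, and so that the resulting bound genuinely certifies hardness of the underlying property-testing problem.

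Next I would average against arbitrary mixtures. Fix $D_1 = \sum_i \alpha_i P_i^{\otimes n} \in conv_n(\mathcal{D}_1)$ and $D_2 = \sum_j \beta_j Q_j^{\otimes n} \in conv_n(\mathcal{D}_2)$, where each $P_i, Q_j \in \mathcal{C}$. Since a worst case over $\mathcal{C}$ dominates every weighted average,
\[
\sup_{C \in \mathcal{C}} \mathbb{P}_{C}\left[\mathcal{T}_n \text{ fails}\right] \ \ge\ \max\!\Big( \sum_i \alpha_i P_i^{\otimes n}(R),\ \sum_j \beta_j Q_j^{\otimes n}(A)\Big) \ =\ \max\big(D_1(R),\, D_2(A)\big) \ \ge\ \tfrac12\big(D_1(R) + D_2(A)\big).
\]
Because $A = R^c$ we have $D_2(A) = 1 - D_2(R)$, so $D_1(R) + D_2(A) = 1 - \big(D_2(R) - D_1(R)\big) \ge 1 - \sup_{S}\big(D_2(S) - D_1(S)\big) = 1 - \|D_1 - D_2\|_{TV}$, the last equality being the variational characterization of total variation distance. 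Combining the two displays gives $\sup_{C \in \mathcal{C}}\mathbb{P}_C[\mathcal{T}_n\text{ fails}] \ge \tfrac12\big(1 - \|D_1 - D_2\|_{TV}\big)$ for \emph{every} admissible pair $(D_1,D_2)$; since the left-hand side is independent of $D_1,D_2$, I may optimize the right-hand side to obtain $\tfrac12\big(1 - \inf_{D_1 \in conv_n(\mathcal{D}_1),\, D_2 \in conv_n(\mathcal{D}_2)} \|D_1 - D_2\|_{TV}\big)$.

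Finally, this bound holds for every deterministic $\mathcal{T}_n$, hence it survives the infimum over testers, proving the claim. I would also remark that a randomized tester is a distribution over deterministic ones: carrying its internal randomness $\rho$ through the same computation (so that $A_\rho, R_\rho$ now depend on $\rho$, and $\sup_{C \in \mathcal{D}_1}\mathbb{E}_\rho[C^{\otimes n}(R_\rho)] \ge \mathbb{E}_\rho[D_1(R_\rho)]$, and likewise for $\mathcal{D}_2$) yields the identical conclusion, so no generality is lost. The argument has no real obstacle; the only points requiring care are the book-keeping of which instance class ($\mathcal{D}_1$ versus $\mathcal{D}_2$) is charged for landing in which region, and the observation that passing from single distributions to their $n$-fold-product convex hulls costs nothing on the lower-bound side precisely because the adversary already takes a supremum over $\mathcal{C} \supseteq \mathcal{D}_1 \cup \mathcal{D}_2$ --- this is what makes the strong $conv_n$ form (whose TV diameter can be far smaller than $\min_{P \in \mathcal{D}_1, Q \in \mathcal{D}_2}\|P^{\otimes n} - Q^{\otimes n}\|_{TV}$) available for free.
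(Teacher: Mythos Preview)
Your argument is correct and is precisely the standard Le~Cam two-point/mixture argument. Note, however, that the paper does not actually prove this theorem: it is quoted as a black box from Canonne's survey (Corollary~D.5.8 of \cite{canonne2015survey}) and used directly in the proof of Theorem~\ref{thm:lower}. So there is no ``paper's own proof'' to compare against; your write-up simply supplies the (short) proof that the paper chose to cite, and it matches the standard derivation one finds in that reference.
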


Now, if $n$ and $\mathcal{D}_1,\mathcal{D}_2$ satisfy $\inf_{ \substack{D_1 \in conv_n(\mathcal{D}_1) \\ D_2 \in conv_n(\mathcal{D}_2)} }\left\|D_1-D_2\right\|_{TV} \leq \frac{1}{3}$ in equation~\eqref{lblecam}, the above Theorem~\ref{thm:lecam} (along with Yao's principle) implies a lower bound of $\Omega(n)$ for the testing algorithm for any possibly randomized algorithm that that can only fail with probability $\frac{1}{3}$. 

\begin{theorem}[Sample Complexity Lower Bound] \label{thm:lower}
Any (possibly randomized) algorithm that is given  $n$ independent samples from the multivariate Gaussian distribution $\mathcal{N}(\bv 0,\bSigma)$ where $\bSigma$ is a shortest path covariance matrix, reads a fixed subset $S$ of $s$ entries from each sample, and returns $\bv{\tilde \Sigma}$ such that $\|\bSigma- \Tilde{\bSigma}\|_2 \leq \epsilon\|\Tilde{\bSigma}\|_2 $ with probability $\ge 2/3$ requires $n = \Omega \left (\frac{d^2}{s^2 \epsilon^2} \right )$ vector samples. When $s = O(\sqrt{D})$, as when using a graph sparse ruler, this gives $n = \Omega \left (\frac{d^2}{D \epsilon^2} \right )$.
\end{theorem}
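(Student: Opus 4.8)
The plan is to instantiate the property-testing framework of Theorem~\ref{thm:lecam} with a pair of hypothesis families that are hard to distinguish from any fixed set $S$ of $s$ entries, but that force a spectral-norm separation of $\epsilon\norm{\bSigma}_2$ in the reconstructed covariance. Concretely, I would take the ``yes'' family $\mathcal{D}_1$ to consist of a single distribution $\mathcal{N}(\bv 0, \bI)$ (which is a shortest path covariance for the clique, with $\ba_0 = 1$, $\ba_i = 0$ for $i>0$), and the ``no'' family $\mathcal{D}_2$ to consist of distributions $\mathcal{N}(\bv 0, \bI + \gamma \bv u \bv u^T)$ where $\bv u$ ranges over a suitable set of sign vectors (or standard basis vectors) and $\gamma = \Theta(\epsilon)$ is chosen so that $\norm{\gamma \bv u \bv u^T}_2 = \gamma \norm{\bv u}_2^2 \ge \epsilon \norm{\bI + \gamma \bv u \bv u^T}_2$, i.e. any valid estimator must distinguish the two cases. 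Each such perturbed matrix is again a shortest path covariance for the clique (all off-diagonal distances equal $1$), so both families lie in $\mathcal{C}$, and the TV-separation hypothesis of Theorem~\ref{thm:lecam} holds because the spectral gap forces $\|D - D'\|_{TV}$ bounded below for $D \in \mathcal{D}_2$, $D' \in \mathcal{P}$.

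Next I would carry out the core estimate: bound $\inf \|D_1 - D_2\|_{TV}$ over the convex hulls of the $n$-fold products \emph{after restriction to the coordinates in $S$}. Since the algorithm only sees $\bv x_S^{(l)}$, the relevant distributions are $\mathcal{N}(\bv 0, [\bI]_S) = \mathcal{N}(\bv 0, \bI_s)$ versus mixtures of $\mathcal{N}(\bv 0, \bI_s + \gamma \bv u_S \bv u_S^T)$. I would pick the perturbation directions so that each $\bv u_S$ is small or sparse on $S$ — for instance, letting $\bv u$ range over the standard basis vectors $\bv e_1,\ldots,\bv e_d$ and averaging, most of which have $\bv e_i$ restricted to $S$ equal to zero whenever $i \notin S$; alternatively use random sign vectors and argue the restriction to $s$ coordinates has small norm. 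The point is that on the $s$ visible coordinates the mixture is statistically close to the unperturbed Gaussian. To make this precise I would use Pinsker's inequality together with the KL divergence formula for Gaussians (Lemma~\ref{lem:KLg}), exactly as in the Matrix Determinant Lemma / Sherman–Morrison computation sketched in the commented-out section: the per-sample KL between $\mathcal{N}(\bv 0,\bI_s)$ and $\mathcal{N}(\bv 0, \bI_s + \gamma \bv v \bv v^T)$ is $O(\gamma^2 \norm{\bv v}_2^4)$ for small $\gamma \norm{\bv v}_2^2$, so $n$ samples give squared-TV at most $O(n \gamma^2 \norm{\bv v}_2^4)$. Choosing $\gamma = \Theta(\epsilon)$ and exploiting $\norm{\bv v}_2^2 \le s$ (or the standard-basis trick giving an effective norm-squared of $d/s$ in an averaged sense), this is $\le \tfrac13$ whenever $n = O\!\left(\frac{d^2}{s^2\epsilon^2}\right)$, and then Theorem~\ref{thm:lecam} plus Yao's principle upgrades this to the claimed $\Omega$ bound for randomized algorithms. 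Substituting $s = O(\sqrt D)$ yields $n = \Omega(d^2/(D\epsilon^2))$.

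The main obstacle, and the step I would spend the most care on, is arranging the hard instances so that the $\ell_2$-mass of the perturbation is simultaneously (i) large enough in the full $d$-dimensional matrix to force an $\epsilon\norm{\bSigma}_2$ spectral-norm error — this wants $\norm{\bv u}_2^2 = \Omega(d)$ — yet (ii) small enough \emph{on the observed coordinates $S$} that $n$ samples cannot detect it — this wants the restriction of $\bv u$ to $S$ to have squared norm only $O(s)$ or so. The clean resolution is to use a \emph{family} of instances (rank-one perturbations along the $d$ standard basis vectors, or along a set of nearly-orthogonal vectors supported off $S$) rather than a single alternative, and invoke the convex-hull/mixture form of Le Cam in Theorem~\ref{thm:lecam}: the adversary's mixture spreads the perturbation over directions the algorithm cannot see, so the visible marginal is (close to) a fixed Gaussian while every individual instance still violates the spectral-norm guarantee. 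Verifying that the mixture of restricted Gaussians is genuinely close in TV to the single restricted Gaussian — not merely that each component is — is where the ``convex hull'' strength of Theorem~\ref{thm:lecam} is essential, and getting the quantitative constants there is the delicate part; everything else is the routine determinant/trace computation already rehearsed in the excerpt.
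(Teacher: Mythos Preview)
Your high-level framework (Le Cam via Theorem~\ref{thm:lecam}, Pinsker, the Gaussian KL formula, and the determinant/Sherman--Morrison computation) is exactly the paper's. But the proposal contains a genuine error and an unnecessary complication, and they stem from the same oversight.

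The error: you assert that $\bI + \gamma \bv u\bv u^T$ is a shortest path covariance for the clique for general sign or basis vectors $\bv u$. This is false. A shortest path covariance on the clique must have constant diagonal $\ba_0$ and constant off-diagonal $\ba_1$, so the only rank-one perturbations of $\bI$ that stay in the class are multiples of $\J = \bv 1\bv 1^T$. A standard-basis perturbation $\gamma\bv e_i\bv e_i^T$ gives a non-constant diagonal, and a general $\pm 1$ vector gives both $+\gamma$ and $-\gamma$ off-diagonal entries, which is incompatible with the clique (and more generally would require you to exhibit, for each such $\bv u$, a graph whose shortest-path metric exactly encodes the sign pattern $u_iu_j$ --- a nontrivial and often impossible task). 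So most of your proposed ``no'' instances are not in the hypothesis class at all.

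The unnecessary complication: you identify as the ``main obstacle'' the need to build a \emph{family} so that the perturbation is large on $[d]$ but small on $S$, and plan to lean on the convex-hull strength of Theorem~\ref{thm:lecam}. In fact the single instance $\bv u = \bv 1$ with $\gamma = \epsilon/d$ (not $\gamma=\Theta(\epsilon)$ as you wrote) already resolves this tension with no mixture: $\norm{\gamma\bv 1\bv 1^T}_2 = \gamma d = \epsilon$ gives the required full spectral gap, while the restriction to $S$ has $\norm{\bv 1_S}_2^2 = s$, so the per-sample KL is $O\!\left((\gamma s)^2\right) = O\!\left(\epsilon^2 s^2/d^2\right)$, yielding $n = \Omega(d^2/(s^2\epsilon^2))$ directly from two singleton families. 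This is precisely the paper's proof: take $\bSigma_1 = \bI$ and $\bSigma_2 = \bI + \tfrac{\epsilon}{d}\J$, both of which are shortest path covariances for \emph{any} graph, and run the determinant/trace computation you outlined on the $s\times s$ principal submatrices.
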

\begin{proof}

We consider two distributions $P_1=\mathcal{N}(0,\bSigma_1)$ and $P_2=\mathcal{N}(0,\bSigma_2)$ where $\bSigma_1 =\mathbf{I}$ and $\bSigma_2 =\mathbf{I}+ \frac{\epsilon}{d}\J$, with $\J$ denoting the  all ones matrix. Note that both are shortest path covariance matrices, valid for any graph $G$ (where covariance at every distance more than zero is either $0$ or $\epsilon/d$ ).
We make the following observation:
\begin{align*}
   \left\|\bSigma_1 -\bSigma_2 \right\|_2^2 &=\left\|\frac{\epsilon}{d}\J\right\|^2_2\\
   &= \left\|\frac{\epsilon}{d}\J\right\|^2_F  && \text {(as the matrix has rank $1$)}\\
  & = \frac{\epsilon^2d^2}{d^2} =\epsilon^2.
\end{align*}
Thus we have $\left\|\bSigma_1 -\bSigma_2 \right\|_2=\epsilon$. Also notice that $\left\|\bSigma_1 \right\|_2^2=1$ and $\left\|\bSigma_2 \right\|_2^2 =1 + \epsilon^2$ are both upper bounded by $2$ provided $\epsilon<1$ which implies $\left\|\bSigma_1 -\bSigma_2 \right\|_2 \geq \frac{\epsilon}{2}\left\|\bSigma_i \right\|_2 $ for $i=1,2$.

Now if we have an learning algorithm that can estimate a given shortest path covariance matrix $\bSigma$ within error $\frac{\epsilon}{2}\left\|\bSigma \right\|_2$, we can distinguish between the two distributions with covariance $\bSigma_1$ and  $\bSigma_2$. If the problem of distinguishing between these two distribution requires $n $ samples then the estimation problem requires at least $n$ samples.

Here we have access to data samples according to a fixed subset of $s$ entries $S$. Thus, our input is just $n$ $s$-variate Gaussian random vectors from either  $\mathcal{N}(0,[\bSigma_1]_S)$ or $\mathcal{N}(0,[\bSigma_2]_S)$ where the subscript $S$ denote the principal sub-matrix corresponding to the set $S$. Here $[\bSigma_1]_S=\mathbf{I}_S$ is an $s \times s$ identity matrix and $[\bSigma_2]_S=\mathbf{I}_S + \frac{\epsilon}{d}\J_S$ where $\J_S$ is the $s \times s$  all ones matrix.
 
Now we consider ${P}_{1,S}= \mathcal{N}(0,[\bSigma_1]_S) $ and ${P}_{2,S}= \mathcal{N}(0,[\bSigma_2]_S) $ and the property $\mathcal{P}$ that the distribution is  $ \mathcal{N}(0,[\bSigma_1]_S)$, which is equivalent to the property that the full distribution is $ \mathcal{N}(0,[\bSigma_1])$.
To apply Theorem~\ref{thm:lecam}, we need to show that $\left\|D_1-D_2 \right\|_{TV} \leq \frac{1}{3}$ for some $D_1$ and $D_2$ in the convex hulls of $n$-fold  product distributions of ${P}_{1,S}= \mathcal{N}(0,[\bSigma_1]_S) $ and ${P}_{2,S}= \mathcal{N}(0,[\bSigma_2]_S) $ respectively. Since these sets are just singletons, this means we must show the bound for $D_1 ={P}_{1,S}^{\otimes n}$ and $D_2 ={P}_{2,S}^{\otimes n}$.

%
%
Now we provide an upper bound to the total variation distance using Pinsker's inequality.

\begin{fact}[Pinsker's Inequality)\cite{diakonikolas2019robust}]\label{fact:pink}
For any two distributions $P$ and $Q$ we have 
\begin{align*}
    \left\|P-Q \right\|^2_{TV} \leq  \frac{1}{2} KL(P\|Q).
\end{align*}
And for  product distributions $P^n=P_1\times \ldots\times P_n$ and $Q^n=Q_1\times \ldots\times Q_n$, we have 
    \begin{align*}
    \left\|P^n-Q^n \right\|^2_{TV} \leq  \frac{1}{2}\sum_{i=1}^n KL(P_i\|Q_i).
\end{align*}
\end{fact}
We bound the total variation distance between $D_1$ and $D_2$ using Fact~\ref{fact:pink} 
\begin{align}
\left\|D_1-D_2 \right\|^2_{TV} = \left\|{P}_{1,S}^{\otimes n}-{P}_{2,S}^{\otimes n}\right\|^2_{TV} \le \frac{n}{2} KL({P}_{1,S}\| {P}_{2,S}) \label{p1}.
\end{align}

\begin{fact}[KL divergence between multivariate Gaussians\cite{diakonikolas2019robust}]\label{fact:KLg}
The KL-divergence between two $d$-dimensional multivariate Gaussian distributions $\mathcal{N}(0,\bSigma_1)$ and $\mathcal{N}(0,\bSigma_2)$ is:
\begin{align*}
    KL(\mathcal{N}(0,\bSigma_1)\|\mathcal{N}(0,\bSigma_2)) = \frac{1}{2}\left[Tr\left(\bSigma_2^{-1}\bSigma_1 \right) -d -\log \left(\frac{det(\bSigma_1)}{det(\bSigma_2)} \right) \right].
\end{align*}
\end{fact}
Now we bound the term in equation ~\eqref{p1} using Fact~\ref{fact:KLg} with $\bSigma_1=\mathbf{I}_S $ and $\bSigma_2=\mathbf{I}_S + \frac{\epsilon}{d}\J_S$
\begin{align*}
    KL({P}_{1,S}\| {P}_{2,S}) & = \frac{1}{2}\left[ \log \left(\frac{det(\mathbf{I}_S + \frac{\epsilon}{d}\J_S)}{det(\mathbf{I}_S )} \right)-s +Tr\left((\mathbf{I}_S + \frac{\epsilon}{d}\J_S)^{-1}\mathbf{I}_S \right) \right].
\end{align*}
Now the matrix $(\mathbf{I}_S + \frac{\epsilon}{d}\J_S)$ has the top eigen-value $1+\frac{\epsilon}{d}s $ and the rest are $1$. As determinant is the product of the eigen-values of the matrix, we have the determinant to be $1+\frac{\epsilon}{d}s $. Using Sherman-Morrison-Woodbury Inversion lemma we have $  (\mathbf{I}_S + \frac{\epsilon}{d}\J_S)^{-1} = \mathbf{I}_S - \frac{\frac{\epsilon}{d}}{1+\frac{\epsilon}{d}s}\J_S$. So $ Tr\left((\mathbf{I}_S + \frac{\epsilon}{d}\J_S)^{-1}\mathbf{I}_S \right)$ is $s(1- \frac{\frac{\epsilon}{d}}{1+\frac{\epsilon}{d}s})$. We have
\begin{align*}
    KL({P}_{1,S}\| {P}_{2,S}) & = \frac{1}{2}\left[ \log \left(1+ \frac{\epsilon}{d}s\right) -s + s(1- \frac{\frac{\epsilon}{d}}{1+\frac{\epsilon}{d}s}) \right] \\
    &\leq  \frac{1}{2} \left[ \frac{\epsilon}{d}s - \frac{\frac{\epsilon}{d}s}{1+\frac{\epsilon}{d}s} \right]\\
    & \leq \frac{\epsilon^2 s^2}{d^2}
\end{align*}
Now plugging the value in equation~\eqref{p1}, we have $\left\|D_1-D_2 \right\|^2_{TV} =  O\left(n\frac{\epsilon^2 s^2}{d^2} \right)$. So we need $n= \Omega\left(\frac{d^2}{\epsilon^2 s^2}\right)$ to make value large so that the failure probability can be small.

\end{proof}

\subsection{Star Graphs}\label{sec:star}
We now study the shortest path covariance matrix of a star graph  and show that the sample complexity for estimation  matches the lower bound of Theorem \ref{thm:lower}. 
This extends the result of \cite{toep} for Toeplitz matrices and an interesting direction is to see if we can close the gap between the upper and lower bounds for more general graph structures.

We consider a star with $l$ branches with $\Delta$ nodes and  one center node. So we have $\Delta l+1=d$.

\begin{figure}[h]
    \centering
    \subfloat[\label{fig:sgph}]{{\includegraphics[width=4cm]{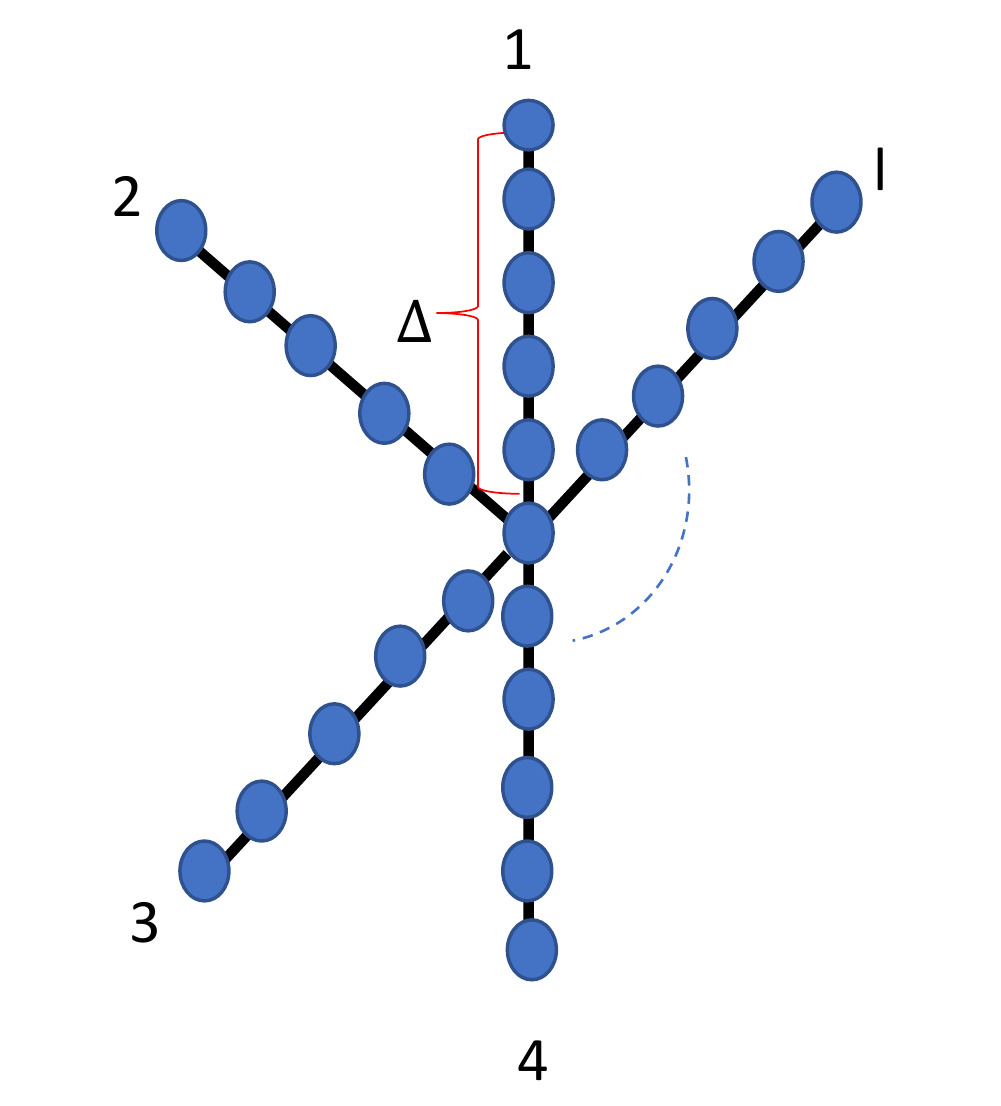} }}
    \qquad
    \subfloat[\label{fig:matstar}]{{\includegraphics[height=5cm,width=7cm]{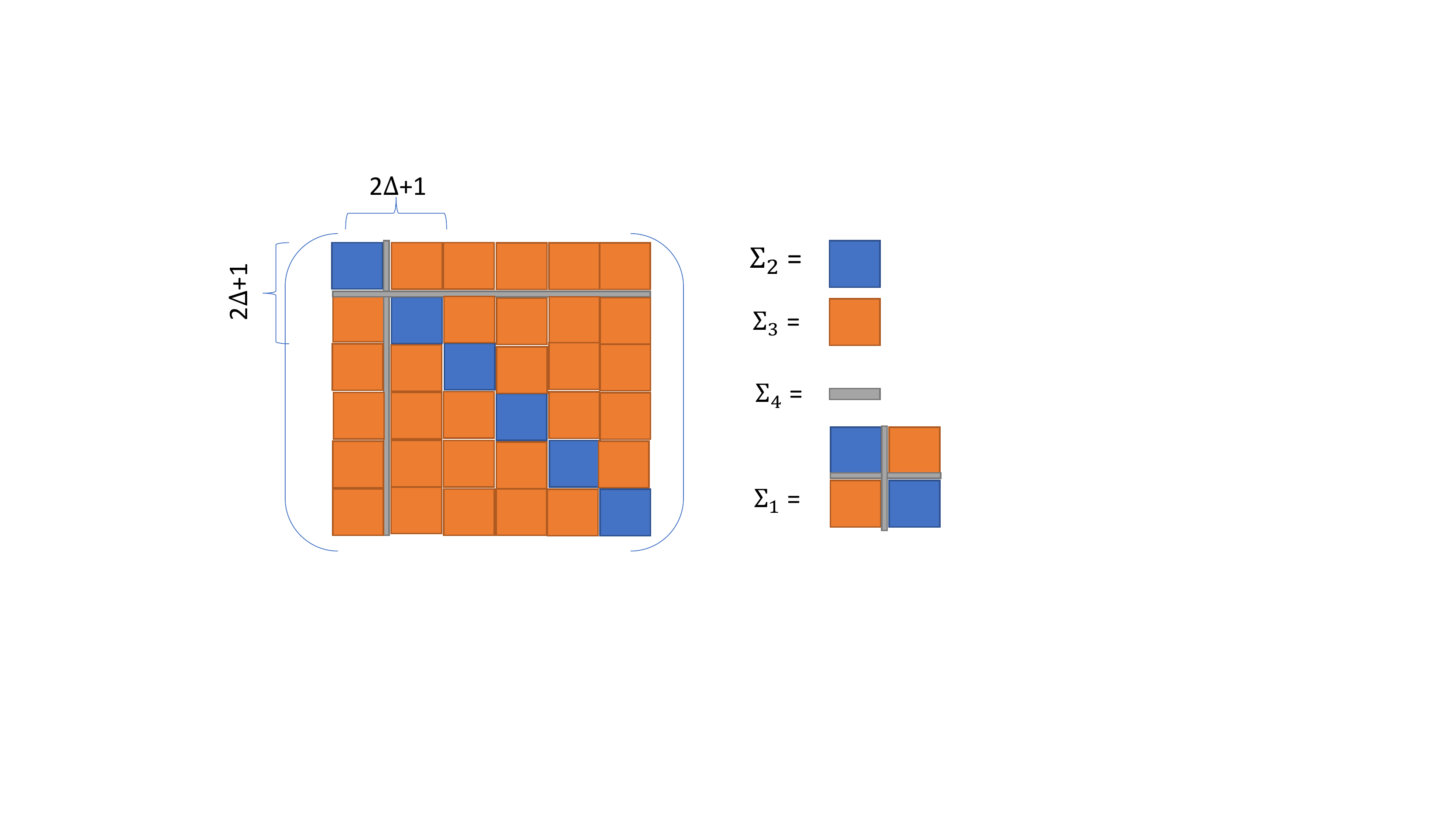} }}
    \caption{\ref{fig:gph}: Star graph. \ref{fig:matstar}: the corresponding covariance matrix.}%
    \label{fig:star}%
\end{figure}

\begin{theorem}[Sample Complexity Upper Bound for Star Graph] \label{thm:star}
Consider the shortest path covariance matrix  $\bSigma \in  \mathbb{R}^{d\times d}$ with underlying graph that is a star  with $l$ branches and $\Delta$ nodes in each branch such that $l\Delta+1=d$ (see Figure \ref{fig:star}). For any $\epsilon,\delta>0$, using entry sample $\Theta(\sqrt{\Delta})$ and vector sample complexity $n= \Tilde{O}\left(\frac{d^2}{\Delta\epsilon^2} \right)$, an approximate covariance $\Tilde{\bSigma}$ can be estimated such that $\left\|\Tilde{\bSigma} -\bSigma \right\|_2 \leq \epsilon \left\|\bSigma \right\|_2$ with probability $\ge 1-\delta$. 
\end{theorem}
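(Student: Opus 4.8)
The plan is to reduce the star to a combination of a $\Delta\times\Delta$ Toeplitz error matrix and a $\Delta\times\Delta$ Hankel error matrix, one per branch, and then to bound the spectral norm of each of these random structured matrices using the Fourier/Vandermonde-type estimates underlying the Toeplitz analysis of \cite{toep}, rather than passing through the Frobenius norm as in Theorem~\ref{thm:upper}.

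\textbf{Setup.} First note that the star with $l$ branches of $\Delta$ nodes has diameter $D=2\Delta$: a leaf-to-leaf path through the center has $2\Delta$ edges and is a geodesic, and no two nodes are farther apart. Hence Algorithm~\ref{alg:sparse} applied to this geodesic produces a graph sparse ruler $R^G$ of size $\Theta(\sqrt{2\Delta})=\Theta(\sqrt\Delta)$ realizing every distance in $\{0,1,\dots,2\Delta\}$ (index differences along a geodesic equal shortest path distances), so Algorithm~\ref{alg:est} has entry sample complexity $\Theta(\sqrt\Delta)$ and outputs estimates $\tilde\ba_0,\dots,\tilde\ba_{2\Delta}$. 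Writing $\bv e\eqdef\tilde\ba-\ba$ for the error vector and using that $\text{Graph-Cov}(G,\cdot)$ is linear in the covariance vector, the error matrix is $\bv M\eqdef\tilde\bSigma-\bSigma=\text{Graph-Cov}(G,\bv e)$, and it suffices to show $\norm{\bv M}_2\le\epsilon\norm{\bSigma}_2$ with probability $\ge 1-\delta$.

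\textbf{Block decomposition.} Index the coordinates as (center; branch $1$; $\dots$; branch $l$), writing $(b,k)$ for the node at distance $k\in\{1,\dots,\Delta\}$ from the center in branch $b$. Then $\bv M_{(b,k),(b,k')}=\bv e_{|k-k'|}$, $\bv M_{(b,k),(b',k')}=\bv e_{k+k'}$ for $b\neq b'$, $\bv M_{0,(b,k)}=\bv e_k$, and $\bv M_{0,0}=\bv e_0$. Split $\bv M=\bv M_c+\bv M_r$, where $\bv M_c$ agrees with $\bv M$ on the center's row and column and is zero elsewhere; $\bv M_c$ has rank $\le 2$, so $\norm{\bv M_c}_2\le 3\norm{\hat{\bv c}}_2\le 3\sqrt d\,\max_s|\bv e_s|$, where $\hat{\bv c}\in\R^d$ stacks $\bv e_0$ and $l$ copies of $(\bv e_1,\dots,\bv e_\Delta)$. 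The remaining part is $\bv M_r=\bI_l\otimes(\bv T-\bv H)+\J_l\otimes\bv H$, where $\J_l$ is the $l\times l$ all-ones matrix, $\bv T\in\R^{\Delta\times\Delta}$ is the symmetric Toeplitz matrix $\bv T_{k,k'}=\bv e_{|k-k'|}$, and $\bv H\in\R^{\Delta\times\Delta}$ is the Hankel matrix $\bv H_{k,k'}=\bv e_{k+k'}$. Using $\norm{\bI_l\otimes\bv A}_2=\norm{\bv A}_2$ and $\norm{\J_l\otimes\bv A}_2=l\norm{\bv A}_2$,
\begin{align*}
\norm{\bv M}_2\;\le\;3\sqrt d\,\max_s|\bv e_s|\;+\;\norm{\bv T}_2\;+\;(l+1)\norm{\bv H}_2 .
\end{align*}
For the first term, the concentration argument in the proof of Theorem~\ref{thm:upper} gives $\max_s|\bv e_s|\le t$ with probability $\ge 1-\delta/3$ once $n=\tilde O(\ba_0^2/t^2)$; taking $t=\Theta(\epsilon\norm{\bSigma}_2/\sqrt d)$ and $\ba_0\le\norm{\bSigma}_2$ makes this term $\le\tfrac\epsilon4\norm{\bSigma}_2$ with $n=\tilde O(d/\epsilon^2)$. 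For $\norm{\bv T}_2$, note that $\bv T$ is exactly the error matrix produced by the sparse-ruler Toeplitz estimator of \cite{toep} on the $\Delta\times\Delta$ Toeplitz covariance $\bSigma|_{\text{branch }1}$ with $n$ samples; their analysis — bounding $\norm{\bv T}_2$ by the sup-norm of its generating trigonometric polynomial $\sum_s\bv e_s e^{is\theta}$, which concentrates around $0$ at rate $\tilde O(\ba_0\sqrt{\Delta/n})$ via an $O(\Delta)$-point frequency net plus sub-exponential concentration of the corresponding Gaussian quadratic forms — yields $\norm{\bv T}_2=\tilde O(\norm{\bSigma}_2\sqrt{\Delta/n})$. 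For $\norm{\bv H}_2$, use that a Hankel matrix is a Toeplitz matrix composed with the antidiagonal reversal permutation $\bv J_{\mathrm{rev}}$, so $\norm{\bv H}_2=\norm{\bv H\bv J_{\mathrm{rev}}}_2$ where $\bv H\bv J_{\mathrm{rev}}$ is the (nonsymmetric) Toeplitz matrix with entries $\bv e_{\Delta+1+k-m}$ — an error matrix in the values $\ba_2,\dots,\ba_{2\Delta}$ whose entries have the same variance scale $\le 2\ba_0^2$ — and the identical Fourier argument gives $\norm{\bv H}_2=\tilde O(\norm{\bSigma}_2\sqrt{\Delta/n})$. Combining, $\norm{\bv M}_2\le\tfrac\epsilon4\norm{\bSigma}_2+\tilde O\!\big((l+1)\norm{\bSigma}_2\sqrt{\Delta/n}\big)$; forcing the last term below $\tfrac\epsilon2\norm{\bSigma}_2$ needs $n=\tilde O\!\big((l+1)^2\Delta/\epsilon^2\big)$, and since $l\Delta+1=d$ we have $l+1=O(d/\Delta)$, so the binding requirement is $n=\tilde O\!\big(d^2/(\Delta\epsilon^2)\big)$ (the $\tilde O$ absorbing $\mathrm{poly}\log(D/\delta)$ from the union bounds).

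\textbf{Main obstacle.} Everything except the $\tilde O(\sqrt\Delta)$ bounds on $\norm{\bv T}_2$ and $\norm{\bv H}_2$ is routine; these are the crux. A naive triangle inequality over the $\Theta(\Delta)$ diagonals of $\bv T$ only gives $\norm{\bv T}_2=O(\Delta\cdot\max_s|\bv e_s|)$, which would recover nothing beyond Corollary~\ref{cor:lowrank}; the $\sqrt\Delta$ saving must come from cancellation among the correlated, mean-zero estimates $\bv e_s$, which is precisely what the Fourier/Vandermonde analysis of \cite{toep} delivers for Toeplitz error matrices. What I would need to check carefully is (i) that this analysis applies verbatim here — it uses only that each $\bv e_s$ is a mean-zero average of $n$ i.i.d.\ copies of $\x_i\x_j-\ba_s$ for jointly Gaussian $(\x_i,\x_j)$ with variance $\le 2\ba_0^2$, and that the generating-polynomial values are Gaussian quadratic forms whose variances stay $O(\ba_0\norm{\bSigma}_2\cdot\Delta/n)$, a bound that follows from a Schur-product estimate on the covariance matrix of the selected entries and so is insensitive to which graph nodes the sparse ruler picks; and (ii) that reversing $\bv H$ to a Toeplitz matrix is compatible with reusing the same $n$ samples across all $\bv e_s$, which it is, since $\bv J_{\mathrm{rev}}$ acts only on indices and not on the randomness.
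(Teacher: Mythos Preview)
Your approach is correct and reaches the same bound, but takes a genuinely different route from the paper. You decompose the branch-indexed part of the error as $\bv M_r=\bI_l\otimes(\bv T-\bv H)+\J_l\otimes\bv H$ and exploit Kronecker-product spectral norms to get $\norm{\bv M_r}_2\le\norm{\bv T}_2+(l+1)\norm{\bv H}_2$, then invoke separate Fourier-type bounds on the Toeplitz block $\bv T$ and the Hankel block $\bv H$. The paper instead observes that \emph{every} block of $\bSigma$ --- and hence every block of $\bv M$ --- is a submatrix of the single $(2\Delta+1)\times(2\Delta+1)$ Toeplitz covariance $\bSigma_1$ along the diameter path; in particular your Hankel $\bv H$ is one, since a branch-$b$ node at distance $k$ and a branch-$b'$ node at distance $k'$ from the center have shortest-path distance $k+k'$, exactly the distance between path positions $\Delta-k$ and $\Delta+k'$. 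So one black-box application of the \cite{toep} Toeplitz estimator to $\bSigma_1$ already gives $\norm{\bv T}_2,\norm{\bv H}_2\le\norm{\tilde{\bSigma}_1-\bSigma_1}_2\le\epsilon_1\norm{\bSigma_1}_2$ with $\tilde O(\Delta/\epsilon_1^2)$ samples, and the crude block inequality $\norm{\bv M}_2^2\le\sum_{\text{blocks}}\norm{\cdot}_2^2=O(l^2)\,\norm{\tilde{\bSigma}_1-\bSigma_1}_2^2$ finishes after rescaling $\epsilon_1=\Theta(\epsilon\Delta/d)$. This dissolves your ``main obstacle'': you do not need to re-run the Fourier/symbol argument on the nonsymmetric Toeplitz $\bv H\bv J_{\mathrm{rev}}$ (the one place your outline would require genuine work, and where the \cite{toep} analysis needs at least a small adaptation to complex symbols), because $\norm{\bv H}_2$ is already controlled by the submatrix property. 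Your Kronecker decomposition is cleaner about where the factor $l$ enters and avoids the sum-of-squared-blocks inequality, but the paper's route is shorter and sidesteps precisely the step you flagged as needing care.
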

\begin{proof}

By exploiting the structure and shortest path distance property, the covariance matrix $\bSigma$ can be thought to be made of different blocks as shown in Figure ~\ref{fig:matstar}. The blocks are: 
\begin{itemize}
    \item The covariance of an individual branch  (in  Figure \ref{fig:matstar} this is drawn as a blue square and denoted by the $\Delta \times \Delta$ matrix $\bSigma_2$).
    \item The covariance between a pair of branches (in  Figure \ref{fig:matstar} this is drawn as an orange square and denoted by the $\Delta \times \Delta$ matrix $\bSigma_3$).
    \item The covariance between the center node and a branch  (in  Figure \ref{fig:matstar} this is drawn as a grey rectangle and denoted by the $\Delta \times 1$ matrix $\bSigma_4$ ).
\end{itemize}
Now we consider the path (corresponding to the diameter of the star graph) with branches 1 and 2 along with the center node. The covariance of this path is a $(2\Delta+1) \times (2\Delta+1)$ Toeplitz matrix (denoted by $\bSigma_1$ in  Figure~\ref{fig:matstar}). The matrices $\bSigma_2,\bSigma_3,\bSigma_4$ are sub-matrices  of $\bSigma_1$. So given an estimate $\Tilde{\bSigma_1}$, the corresponding submatrices $\Tilde{\bSigma_2},\Tilde{\bSigma_3},\Tilde{\bSigma_4}$ all have spectral norm error upper bounded by $\left\|\bSigma_i - \Tilde{\bSigma_i} \right\|_2 \le \left\|\bSigma_1 - \Tilde{\bSigma_1} \right\|_2$. According to  Algorithm ~\ref{alg:est}, we apply a sparse ruler on the path which results in entry sample complexity $O(\sqrt{\Delta})$. As the matrix $\bSigma_1$ is Toeplitz, following the algorithm and analysis of \cite{toep}, we can have an estimate $\Tilde{\bSigma}_1$ such that $\left\|\Tilde{\bSigma}_1 -\bSigma_1 \right\|_2 \leq \epsilon \left\| \bSigma_1 \right\|_2$ with  $\Tilde{O}\left(\frac{\Delta}{\epsilon^2} \right)$ vector samples. The estimation of $\bSigma_1$ leads to the estimation of the whole matrix $\bSigma $, which is composed of blocks of type  $\bSigma_2,\bSigma_3,\bSigma_4$. Specifically, $\bSigma$ can be decomposed into $O\left(\frac{d^2}{\Delta^2} \right)$ blocks. The error matrix $\Tilde{\bSigma}- \bSigma$ is thus also composed of this many blocks, each with error bounded by $\left\| \Tilde{\bSigma}_i - \bSigma_i \right\|_2 \le \left\| \Tilde{\bSigma}_1 - \bSigma_1 \right\|_2 \le \epsilon \norm{\bSigma_1}_2$ for $i \in \{2,3,4\}$. Thus, using that the squared spectral norm of a block matrix is bounded by the sum of squared spectral norms of its blocks \cite{audenaert2006norm},
%
\begin{align*}
    \left\|\Tilde{\bSigma}- \bSigma \right\|_2^2 \leq c_1\frac{d^2}{\Delta^2}\epsilon^2 \left\| \Tilde{\bSigma}_1 - \bSigma_1 \right\|_2^2 \leq c_1\frac{d^2}{\Delta^2}\epsilon^2 \left\| \bSigma_1 \right\|_2^2 \le c_1\frac{d^2}{\Delta^2}\epsilon^2 \left\| \bSigma \right\|_2^2.
\end{align*}
Now we consider $\epsilon_1= \frac{\Delta}{\sqrt{c_1} d}\epsilon$. With $\Tilde{O}(\frac{\Delta}{\epsilon_1^2}) = \Tilde{O}\left(\frac{d^2}{\Delta \epsilon^2} \right) $ samples we have from the above bound that $\left\|\Tilde{\bSigma}- \bSigma \right\|_2 \leq \epsilon\left\| \bSigma \right\|_2$. As the diameter of the graph is $D=2\Delta+1$, we have overall  sample complexity $\Tilde{O}\left(\frac{d^2}{D \epsilon^2} \right)$ which matches the lower bound.
\end{proof}

\section{Conclusion and Future Work}\label{sec:conclusion}

We have studied the sample complexity of graph-structured covariance estimation, specifically focusing on shortest path covariance matrices. We have established an entry sample complexity bound depending on the square root of the underlying graph diameter using a generalization of sparse rulers to shortest path distances. We have also given a bound on the required vector sample complexity of our method, and a near matching lower bound in the case when the diameter $D$ is small. Our work leaves open a number of open questions.

\begin{enumerate}
\item Our upper and lower bounds (Theorems \ref{thm:upper} and \ref{thm:lower}) differ by a $D$ factor. Thus is conceivable that our vector sample complexity when using a $\Theta(\sqrt{D})$ sparse ruler can be improved from $\tilde O\left (\frac{d^2}{\epsilon^2}\right )$ to $\tilde O\left (\frac{d^2}{\epsilon^2 D}\right )$. When $D$ is small, this improvement is minor. However, in important special cases, such as when $\bSigma$ is Toeplitz and $D = d-1$, it is a major gap. The work of \cite{toep} closes the gap when $\bSigma$ is Toeplitz, however it heavily relies on the Fourier analytic structure of Toeplitz matrices, which does not extend to more general graph classes. We believe that matching our lower bound for more general graph classes is an interesting challenge that will require a deeper understanding of the structure of shortest path covariance matrices. Our bound for the star graph in Section \ref{sec:star} is a first step.
\item We give an algorithm with optimal entry sample complexity $\Theta(\sqrt{D})$ and polynomially bounded vector sample complexity $\tilde O(d^2/\epsilon^2)$. At the other extreme, if we use full entry sample complexity $d$, then standard matrix concentration bounds show that $\bSigma$ can be estimated with $O(d/\epsilon^2)$ vector sample complexity \cite{vershynin2018high}. Can we obtain a smooth tradeoff between these measures? This has been done again in the special case of Toeplitz covariance approximation \cite{toep}. Extending the results to general graphs seems to require overcoming two challenges: (1) we need to design graph sparse rulers that measure a larger number of entries and in turn obtain multiple estimates of the covariance $\bv{a}_s$ at each distance $s$ and (2) we need to understand how to bound the correlations between these measurements, to show that they actually lead to lower vector sample complexity. Challenge (2) is likely related to the general goal of improving the vector sample complexity to match our lower bound.
\item Our work specifically applies to shortest path covariance matrices, however an interesting question is if low entry sample complexity using sparse rulers or other techniques can be achieved for other graph-structure covarianced matrices, such as those arising in graph signal processing \cite{shuman2013emerging,ortega2018graph} and Gaussian Markov random fields \cite{rue2005gaussian,uhler2017gaussian}. Some work has be done in this direction \cite{graph}, and obtaining rigorous sample complexity bounds would be very interesting.
 In general, can one characterize the class of covariance matrices for which $o(d)$ entry sample complexity can be obtained? 
\end{enumerate}

\bibliographystyle{alpha}
\bibliography{covref}

\end{document}